\theoremstyle{plain}
\newtheorem{theorem}{Theorem}[section]
\newtheorem{proposition}[theorem]{Proposition}
\theoremstyle{definition}
\theoremstyle{remark}
\title{Stabilize the Latent Space for Image Autoregressive Modeling: A Unified Perspective}
\author{Yongxin Zhu$^{1,3}$, Bocheng Li$^{2,3}$, Hang Zhang$^{4}$, Xin Li, Linli Xu\thanks{Corresponding author.}$^{~~2,3}$, Lidong Bing \\
$^{1}$School of Data Science, University of Science and Technology of China \\
$^{2}$School of Computer Science and Technology, University of Science and Technology of China\\
$^{3}$State Key Laboratory of Cognitive Intelligence, $^{4}$Zhejiang University \\
\texttt{zyx2016@mail.ustc.edu.cn,bcli@mail.ustc.edu.cn,hangzhang\_scu@foxmail.com}\\
\texttt{lixin4ever@gmail.com,linlixu@ustc.edu.cn,binglidong@gmail.com}
}
\begin{document}

\maketitle

\begin{abstract}
Latent-based image generative models, such as Latent Diffusion Models (LDMs) and Mask Image Models (MIMs), have achieved notable success in image generation tasks. These models typically leverage reconstructive autoencoders like VQGAN or VAE to encode pixels into a more compact latent space and learn the data distribution in the latent space instead of directly from pixels. However, this practice raises a pertinent question: Is it truly the optimal choice? In response, we begin with an intriguing observation: despite sharing the same latent space, autoregressive models significantly lag behind LDMs and MIMs in image generation. This finding contrasts sharply with the field of NLP, where the autoregressive model GPT has established a commanding presence. To address this discrepancy, we introduce a unified perspective on the relationship between latent space and generative models, emphasizing the stability of latent space in image generative modeling. Furthermore, we propose a simple but effective discrete image tokenizer to stabilize the latent space for image generative modeling by applying K-Means on the latent features of self-supervised learning models. 
Experimental results show that image autoregressive modeling with our tokenizer (DiGIT) benefits both image understanding and image generation with the next token prediction principle, which is inherently straightforward for GPT models but challenging for other generative models. Remarkably, for the first time, a GPT-style autoregressive model for images outperforms LDMs, which also exhibits substantial improvement akin to GPT when scaling up model size. Our findings underscore the potential of an optimized latent space and the integration of discrete tokenization in advancing the capabilities of image generative models. The code is available at \url{https://github.com/DAMO-NLP-SG/DiGIT}.
\end{abstract}

\section{Introduction}

\begin{figure}[t]
	\centering
\includegraphics[width=1.0\textwidth]{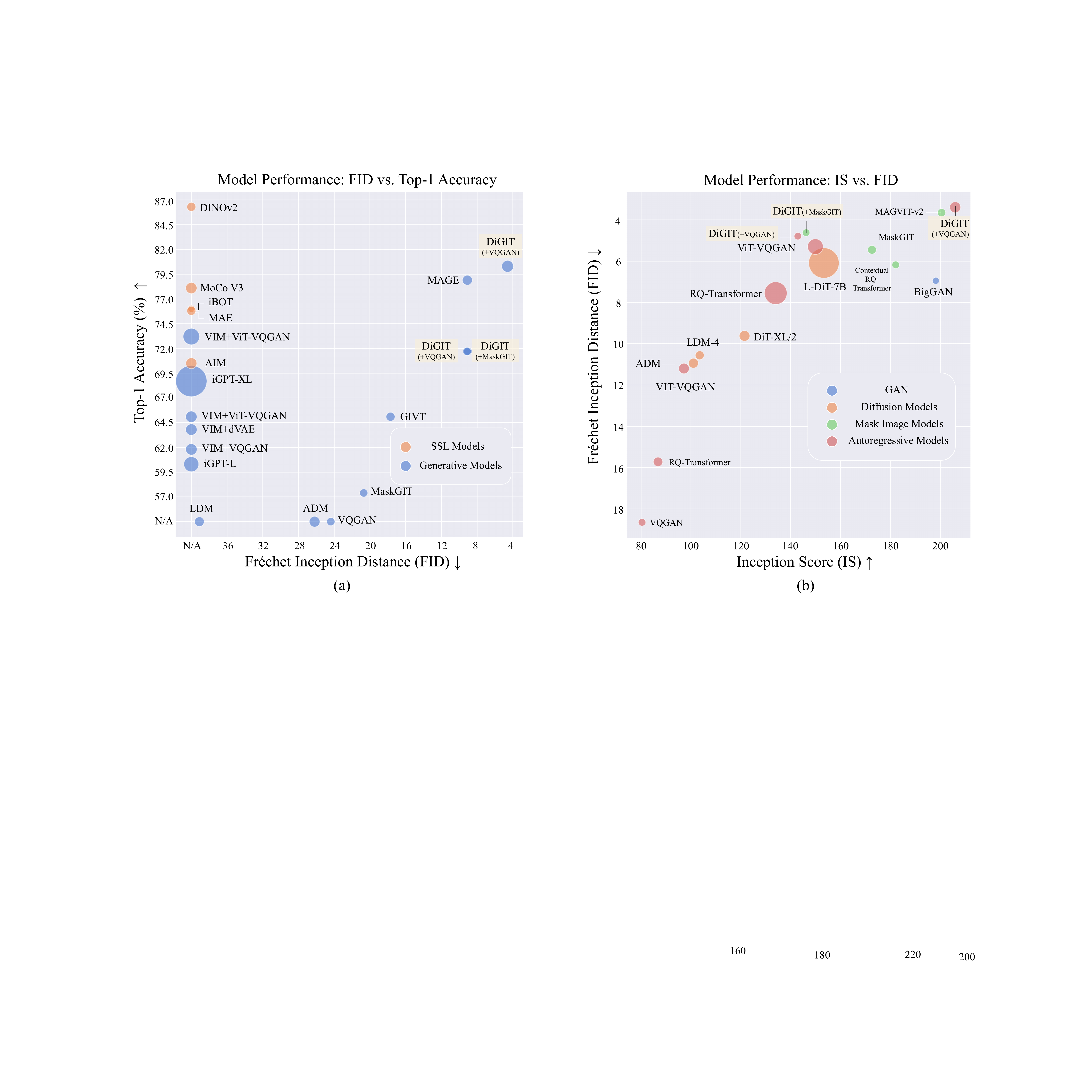}
	\caption{\textbf{(a)}: Linear probe and class-unconditional generation performance of different methods trained and evaluated on ImageNet-1K. 
    \textbf{(b)}: Class-conditional generation performance of different methods on ImageNet-1k. The size of the bubbles indicates the number of parameters in the models. DiGIT achieves SOTA performance in linear probing and establishes a new SOTA in image generation within a single model.}
	\label{fig:intro}
\end{figure}

In recent years, remarkable advancements have been achieved in the field of image generation, principally propelled by the development of latent-based generative models, such as Latent Diffusion Models (LDMs) \cite{rombach2022high,peebles2023scalable} and Mask Image Models (MIMs) \cite{chang2022maskgit,li2022mage}. By employing reconstructive autoencoders such as VQGAN \cite{esser2021taming} or VAE \cite{kingma2013auto} to compress images into a manageable low dimensional latent space, these models can generate highly realistic and imaginative samples. Concurrently, in light of the transformative impact of autoregressive (AR) generative models, such as Large Language Models \cite{radford2018improving,radford2019language,brown2020language,openai2023gpt4} in NLP, it becomes compelling to investigate the feasibility of similar paradigms to images. Despite the advances in image autoregressive pre-training, exemplified by models such as iGPT \cite{pmlr-v119-chen20s}, AIM \cite{el2024scalable} and GIVT \cite{GIVT} in the pixel space, VIM \cite{yu2022vectorquantized} and LVM \cite{bai2023sequential} in the latent space, their performance are still inferior to the leading models \cite{rombach2022high,dhariwal2021diffusion,chang2022maskgit} in image generation, or self-supervised learning models \cite{he2019moco,chen2020mocov2,caron2021emerging,oquab2023dinov2} in image understanding tasks.

An intriguing observation emerges regarding the presumed optimality of current practices in latent space: as illustrated in Figure \ref{fig:intro}(b), though sharing the same latent space, autoregressive models significantly lag behind LDMs and MIMs in the image generation task. This discrepancy prompts reevaluating our understanding of latent spaces and their interaction with generative models, suggesting unexplored avenues worth investigating. A central premise in learning theory \cite{doi:10.1126/science.1127647} is that the latent distribution should retain as much critical information of the data distribution as possible, akin to a compression goal. This leads to a common \textbf{misconception} that \textit{an optimal latent space for reconstruction equates to optimal generative performance.} Nevertheless, in the investigation regarding the reconstruction and generation ability with the popular VQGAN model~\citep{Yu2023LanguageMB}, it is observed that the generation FID will deteriorate when the reconstruction FID becomes lower (where a lower value indicates better performance), challenging the above assumption. 

To address these intriguing discrepancies, we introduce a unified perspective on the relationship between latent spaces and generative models to analyze what constitutes an optimal latent space for generative models. Our findings reveal that, beyond the compression techniques employed by prevalent latent generative models, an optimal latent space should also aim to minimize the distance between distributions under the condition of incorporating a generative model, which is an aspect often overlooked. We critically assess prevalent methodologies and reveal that the stability of latent space is important for generative models. We argue that the reason why autoregressive models underperform iterative models such as LDMs and MIMs is that iterative models can correct 
errors brought by the instability of latent space. 

Drawing from this insight, we propose a straightforward method to stabilize the existing latent space methods for image autoregressive generative models. Unlike conventional autoencoder-style approaches, our approach disentangles the concurrent training of encoders and decoders and commences with encoder-only training through a discriminative self-supervised model \cite{oquab2023dinov2}. This phase does not necessitate a decoder for pixel reconstruction, enabling the encoder to discern the intrinsic and distinguishable features present within the data. Subsequently, a separate decoder of the autoencoder~\cite{esser2021taming} is trained and tasked solely with the pixel reconstruction process, conditioned on the features identified by the encoder. By focusing initially on the encoder's capability to extract meaningful data features independently of pixel reconstruction, we lay a foundation for a more stable and feature-rich latent space. The subsequent independent training phase of the decoder ensures that these captured features can be accurately translated back to pixels.

In support of the autoregressive generative model, which requires discrete tokens for next token prediction, we employ a strategy inspired by VQGAN \cite{esser2021taming} to discretize the encoder's latent feature space with the K-Means clustering algorithm. With this novel image tokenizer induced from the stabilized latent space, the performance of autoregressive generative models in images is enhanced significantly in both image understanding and image generation tasks. We refer to this approach as call 
\textbf{Di}scriminative \textbf{G}enerative \textbf{I}mage \textbf{T}ransformer (DiGIT). Notably, when scaling up the model size, substantial improvements can be achieved. To the best of our knowledge, this is the first evidence that image autoregressive generative models behave analogously to GPT. In essence, this work endeavors to redefine the boundaries of what is possible in image autoregressive modeling through a unified perspective of latent space.

In summary, our contributions to the field of image generative models include:
\begin{itemize}
    \item We introduce a unified perspective on the relationship between latent space and generative models, emphasizing the stability of latent space in image generative modeling.
    \item We propose a novel method to stabilize latent space by disentangling the encoder and decoder training processes. Furthermore, a simple yet effective discrete image tokenizer is proposed to improve the image autoregressive generative model's performance under the philosophy of next token prediction.
    \item The experimental results show that the image autoregressive modeling with our tokenizer leads to SOTA performance in image understanding and generation, with further improvements witnessed when scaling up the model size.
\end{itemize}

\section{Problem Formulation}\label{sec:prob_form}

In section \ref{sec:optimal}, we formalize the latent space requirements for generative models and categorize current latent-based generative models. 
Furthermore, in section \ref{sec:stability}, we analyze the stability of different induced latent spaces and propose to stabilize the latent space for autoregressive generative models instead of stabilizing the generation process with an iterative decoding strategy like LDMs.

\subsection{Latent Space for Generative Models}\label{sec:optimal}
Drawing inspiration from the complexity perspective of latent space induced from autoencoders in \citet{hu2024complexity}, we delve into the latent space for generative models. Generative models aim at learning a distribution to approximate the data distribution $P_{X}$. Formerly, given a tractable prior distribution $P_{Z}$ and a distance metric $D(\cdot, \cdot)$ between distributions, the purpose of a generative model $g\in \mathcal{G}$ is to minimize the distance between data distribution $P_{X}$ and the distribution generated by $g(Z)$:
\begin{equation}
    \min_{g} D(P_{g(Z)}, P_{X}).
\end{equation}
For example, GANs \cite{Goodfellow2014GenerativeAN} employ the Gaussian distribution as their prior and utilize a discriminator network as the distance metric. However, the optimal strategy for data representation in generative models is still under-explored.
Recent studies on latent diffusion models \cite{rombach2022high} have identified that direct learning in the pixel space of images is suboptimal. They propose to learn in a latent space induced by a constrained autoencoder model such as VAE \cite{kingma2013auto} or VQGAN \cite{esser2021taming}, which has been demonstrated to improve the perceptual quality. 

A simple method to construct the latent space is using an encoder $f\in \mathcal{F}: \mathbb{R}^{d} \rightarrow \mathbb{R}^{d_z}$ to map raw data samples $x\in \mathbb{R}^{d}$ into a latent space $f(X)$ of dimension $d_z$. Consequently, the goal of latent-based generative models is to learn the distribution as per the following formula:
\begin{equation}
    \min_{g} D(P_{{g(Z)}}, P_{f(X)}),
\end{equation}
where $P_{f(X)}$ denotes the data distribution in the latent space induced by the encoder $f$. Despite these advances, determining the optimal latent space configuration for generative models remains unresolved.

\paragraph{Distance between distributions in different spaces.}
Given the ultimate goal of the generative model is 
to produce image pixels, a decoder model $h\in\mathcal{H}: \mathbb{R}^{d_z} \rightarrow \mathbb{R}^{d}$, paired with the encoder model $f$, is necessary 
to convert latent representations back into pixels. 
We define a generalized distance between different spaces associated with a decoder $h\in \mathcal{H}$ as:
\begin{equation}
    D^{\mathcal{H}}(P_{f(X)}, P_{X}):= \inf_{h\in\mathcal{H}} D(P_{h(f(X))}, P_{X}).
\end{equation}
By employing $D^\mathcal{H}$ to compare distributions across different spaces, we can define the ideal latent space $f(X)$ as the one that minimizes $D^\mathcal{H}(P_{f(X)}, P_{X})$. This implies selecting a latent representation that minimizes the empirical objective $\mathcal{L}(h|P_{f(X)})$ with the same family $\mathcal{H}$ of decoder models. Such a latent configuration depends on both the data and the decoder training methodology. We can formalize it with an autoencoder framework by looking at the encoder and decoder together, and the primary goal becomes the reconstruction of the input sample $x$, formulated as $\min_{f,h} \mathcal{L}(h(f(x)), x)$.
Once a generative model $g$ successfully approximates the latent distribution $P_{f(X)}$, the generated sample can be efficiently transformed back into pixels using the decoder $h$.

Similarly, we can define the distance between distributions in the latent space $f(X)\in \mathbb{R}^{d_z}$ and data space $X\in \mathbb{R}^d$ conditioned on the latent-based generative model $g\in \mathcal{G}$,
\begin{equation}
    D^{\mathcal{G}}(P_{f(X)}, P_{X}):= \inf_{g\in \mathcal{G}} D(P_{{g(Z)}}, P_{f(X)}),
\end{equation}
which aims to minimize the empirical objective with the generative model family $\mathcal{G}$.

\paragraph{Optimal Latent Space for Generative Models.} 
Now that we have characterized the ideal latent distribution given the family of generative models and the data, the next step is to determine how to find the optimal latent space.
At the population level, the objective for the latent-based generative models with a decoder is:
\begin{align}
    \min_{h\in \mathcal{H}, g\in \mathcal{G}} D(P_{h(g(Z))}, P_{X}) &= \min_{h\in \mathcal{H}, g\in \mathcal{G}} D(P_{h(f(X))}, P_{X}) + D(P_{g(Z)}, P_{f(X)}),
\end{align}
where the first term focuses on optimizing the decoder to enhance the reconstruction quality and the second one is directed towards optimizing the generative model to more accurately approximate the latent space distribution. Inspired by this observation, we can characterize the optimal latent distribution $P_{f(X)}^*$ for a given $P_{X}$ from the perspective of minimizing the distance between distributions in different spaces by defining $f^*$ as
\begin{equation}
\mathop{\mathrm{argmin}}_{f\in \mathcal{F}} D^{\text{latent}}(P_{f(X)}, P_{X}) := f^*(X)=\mathop{\mathrm{argmin}}_{f\in \mathcal{F}} D^{\mathcal{G}}(P_{f(X)}, P_{X}) + D^{\mathcal{H}}(P_{f(X)}, P_{X})
\end{equation} 
Notice that $P_{f^*(X)}$ depends on multiple factors, including $P_{X}$, the distance metric $D$, and the constructed model families $\mathcal{G}$ and $\mathcal{H}$. By integrating $D^\mathcal{G}$ and $D^\mathcal{H}$, we arrive at a comprehensive measure of the distance between the distribution in the latent space and the original data distribution as $D^{\text{latent}}(P_{f(X)}, P_{X})$. The second term, $D^{\mathcal{H}}(P_{f(X)}, P_{X})$, 
is exactly the objective of reconstructive autoencoders. Ultimately, from examining the learning objective pertinent to identifying the optimal latent space for generative models, it becomes evident that:

\textit{A reconstructive autoencoder does not necessarily establish an advantageous latent space for generative models.}

\paragraph{Two Pathways of the Latent Space Construction.}

Although we theoretically analyze the optimization of the optimal latent space for generative models, it is challenging to implement in practice because optimizing $(f, g, h)$ simultaneously is computationally complex. A practical solution is to optimize $D^{\mathcal{G}}$ and $D^{\mathcal{H}}$ separately, allowing for tractable training.

\begin{itemize}
    \item When $D^{\mathcal{H}}(P_{f(X)}, P_{X})$ is not a target for optimization, it implies that the optimization of the decoder within the generative model framework is bypassed. The encoder \textit{independently} forms a latent space, aligning with self-supervised learning (SSL) strategies aimed at uncovering lower-dimensional features from unlabeled data. However, learning the generative models in the latent space induced by SSL models remains relatively unexplored.

    \item On the other hand, when $D^{\mathcal{G}}(P_{f(X)}, P_{X})$ remains fixed, the primary objective becomes optimizing the encoder and decoder to effectively learn and represent the latent space, where $D^{\text{latent}}(P_{f(X)}, P_{X})$ degrades into an autoencoder learning objective. This approach is evident in recent latent-space-oriented generative models, such as LDM \cite{rombach2022high,peebles2023scalable}, VQGAN (AR)\cite{esser2021taming}, and MaskGIT (MIM) \cite{chang2022maskgit}, all of which concentrate on learning $g$ in the latent space with the encoder and decoder frozen. 
    \footnote{$D^{\mathcal{G}}$ and $D^\mathcal{H}$ can achieve zero simultaneously. For example, in the well-known \textit{posterior collapse} phenomenon in the VAE literature, the latent space $f(X)$ is a tractable Gaussian distribution and $D^{\mathcal{G}}(P_{f(X)}, P_{X})$ can be zero by simply setting the generative models as a Gaussian sampler. If the decoder is strong enough and directly generates samples without conditioning on the encoder output, $D^{\mathcal{H}}(P_{f(X)}, P_{X})$ can be zero as well.}
\end{itemize}

While latent generative models such as LDM \cite{rombach2022high}, MaskGIT \cite{chang2022maskgit}, and VQGAN \cite{esser2021taming} share the same latent space induced by a reconstructive autoencoder \cite{esser2021taming} to minimize $D^{\mathcal{H}}(P_{f(X)}, P_{X})$, their image generation performances differ significantly. In the next section, we analyze the reason behind it from the perspective of the latent space.

\subsection{Stability of the Latent space}\label{sec:stability}

We first describe the decoding mechanism of various latent generative models. Both LDM \cite{rombach2022high} and MaskGIT \cite{chang2022maskgit} can be depicted as an iterative sampling procedure given by:
\begin{equation}
    p(x^{T}) = \prod^{T}_{i=1} p(x^{i}|x^{i-1}).
\end{equation}
The intermediate states $x^{i}$ in LDMs represent images infused with Gaussian noise of varying variance, whereas for MaskGIT, they denote discretely tokenized images augmented with masks. In contrast, the autoregressive framework of VQGAN (AR) is described as:
\begin{equation}
    p(x) = p(x_1,\dots,x_N) = \prod^{N}_{j=1} p(x_{j}|x_{<j}),
\end{equation}
where $x_{i}$ represents the $i$-th patch in the image sequence. Notice that $x^{i}$ denotes the \textit{entire} image while $x_{j}$ means the \textit{local} patch tokens. In the autoregressive decoding process, if the previously sampled tokens are incorrect, the accuracy of subsequent tokens would be affected due to error aggregation. In contrast, the iterative decoding approach allows for the revision of earlier misjudged tokens. When the latent space is unstable that small perturbation in pixels can change the latent distribution significantly, the iterative decoding mechanism employed by LDM and MaskGIT can alleviate the error aggregation problem by allowing for revision of earlier misjudged tokens while autoregressive models cannot. Consequently, a stable latent space is required to reduce errors introduced in the generation process of autoregressive models.

This principle forms the foundation of our methodology for developing a metric to evaluate latent spaces with an emphasis on the stability of the latent representations. We examine two primary types of latent spaces: (1) autoencoder induced by minimizing $D^{\mathcal{H}}(P_{f(X)}, P_{X})$ and (2) self-supervised learning (SSL) model induced by minimizing $D^{\mathcal{G}}(P_{f(X)}, P_{X})$. By analyzing network parameters of these models in a linear regime, we derive the following propositions.

\begin{proposition}
The latent space spanned by a linear autoencoder is congruent with that spanned by the principal component loading vectors derived in Principal Component Analysis (PCA). Furthermore, the principal component loading vectors can be elucidated from the autoencoder's weights.
\end{proposition}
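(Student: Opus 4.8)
The plan is to make the linear autoencoder explicit and reduce the reconstruction objective to a rank-constrained matrix-approximation problem. I would write the encoder as $f(x)=W_ex$ and the decoder as $h(z)=W_dz$ with $W_e\in\mathbb{R}^{d_z\times d}$, $W_d\in\mathbb{R}^{d\times d_z}$ and $d_z<d$ (centering the data, $\mathbb{E}[X]=0$, so that biases may be dropped), and take the population objective $\min_{f,h}\mathcal{L}(h(f(x)),x)$ to be the squared reconstruction error $\mathbb{E}\lVert X-W_dW_eX\rVert_2^2$. The first key observation is that the product $M:=W_dW_e$ ranges over exactly the matrices of rank at most $d_z$ as $W_e,W_d$ vary, so the autoencoder problem is equivalent to $\min_{\mathrm{rank}(M)\le d_z}\mathbb{E}\lVert X-MX\rVert_2^2=\min_{\mathrm{rank}(M)\le d_z}\mathrm{tr}\!\big((I-M)\Sigma(I-M)^{\top}\big)$, where $\Sigma=\mathbb{E}[XX^{\top}]$ is the (empirical) covariance of the data.

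Next I would diagonalize $\Sigma=U\Lambda U^{\top}$ with eigenvalues $\lambda_1\ge\cdots\ge\lambda_d\ge 0$ and show, by an Eckart--Young / Ky Fan argument (equivalently, by analyzing the first-order stationarity conditions in the spirit of Baldi--Hornik), that every global minimizer has $M^{\star}=U_{d_z}U_{d_z}^{\top}$, the orthogonal projector onto the top-$d_z$ principal subspace $\mathrm{span}(u_1,\dots,u_{d_z})$; along the way one must verify that the rank-deficient critical points are saddles rather than minima, so that $W_e$ and $W_d$ are full rank at any optimum. From $W_dW_e=U_{d_z}U_{d_z}^{\top}$ with $W_d$ of full column rank it follows that $\mathrm{colspan}(W_d)=\mathrm{rowspan}(W_e)=\mathrm{span}(u_1,\dots,u_{d_z})$, and, more precisely, that $W_e=CU_{d_z}^{\top}$ for some $C\in GL(d_z)$; hence the learned code $W_eX$ is the PCA code $U_{d_z}^{\top}X$ composed with the invertible map $C$ --- precisely the sense in which the linear-autoencoder latent space and the PCA latent space are congruent.

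To \emph{elucidate} the loading vectors from the weights, I would observe that the same analysis gives $W_d=U_{d_z}C^{-1}$, so a thin SVD of the decoder matrix (or an eigendecomposition of $W_dW_d^{\top}$, or of $W_e^{\top}W_e$) produces an orthonormal basis of $\mathrm{colspan}(W_d)$ that, after fixing signs, coincides with $\{u_1,\dots,u_{d_z}\}$; equivalently, imposing the standard tied, orthonormal normalization $W_e=W_d^{\top}$ with $W_d^{\top}W_d=I$ forces the columns of $W_d$ to be exactly the principal component loading vectors, recovering PCA verbatim.

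I expect the main obstacle to be two interlocked subtleties rather than any heavy computation. First, the optimum is pinned down only up to the gauge $W_d\mapsto W_dR$, $W_e\mapsto R^{-1}W_e$ for $R\in GL(d_z)$ (the loss sees only $M$), so the proposition must be, and should be stated as, an equality of subspaces together with a recovery procedure, not a claim that the raw weights equal the eigenvectors. Second, if $\lambda_{d_z}=\lambda_{d_z+1}$ the top-$d_z$ subspace --- and therefore the loading vectors themselves --- are not unique, so the congruence (and any recovered loadings) is only well defined modulo rotations inside the degenerate eigenspace; correspondingly, the classification of critical points that excludes the rank-deficient stationary solutions, which is what guarantees full-rank $W_e,W_d$ to begin with, is the most delicate part of the argument.
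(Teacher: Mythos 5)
Your core argument for the first claim is correct and lands on the same result as the paper, but by a somewhat different reduction: you collapse the two factors into the product $M=W_dW_e$, note that $M$ ranges over all matrices of rank at most $d_z$, and invoke Eckart--Young on $\|(I-M)\Sigma^{1/2}\|_F^2$ to get $M^\star=U_{d_z}U_{d_z}^{\top}$, whereas the paper eliminates the encoder by the first-order condition $W_1=W_2^{\dagger}$ (citing Baldi--Hornik), rewrites $W_2W_2^{\dagger}$ as the orthogonal projector onto $\operatorname{colspan}(W_2)$, and then uses a QR change of variables to reduce to an orthonormally-constrained projection problem. The two routes are essentially equivalent in content; yours buys a cleaner global-optimality certificate (Eckart--Young handles all rank-$\le d_z$ matrices at once, so the saddle-point classification you worry about is not strictly needed for the global statement), and you are more careful than the paper about the $GL(d_z)$ gauge freedom and the $\lambda_{d_z}=\lambda_{d_z+1}$ degeneracy, which the paper glosses over. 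One bookkeeping caveat: your Eckart--Young step implicitly assumes $\Sigma$ is invertible (or requires a small extra argument on $\ker\Sigma$) so that $M\Sigma^{1/2}$ sweeps out all rank-$\le d_z$ matrices.

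The genuine gap is in your third paragraph, on recovering the individual loading vectors from the weights. At an arbitrary unregularized global minimum you have $W_d=U_{d_z}A$ for an \emph{arbitrary} invertible $A$, so the left singular vectors of $W_d$ (equivalently the eigenvectors of $W_dW_d^{\top}$ or $W_e^{\top}W_e$) are $U_{d_z}V$ for some orthogonal $V$ coming from the SVD of $A$; they span the right subspace but do not, after sign fixes, coincide with $u_1,\dots,u_{d_z}$. Likewise, imposing $W_e=W_d^{\top}$ with $W_d^{\top}W_d=I$ leaves the loss invariant under $W_d\mapsto W_dQ$ for any orthogonal $Q$, so the columns are pinned down only up to rotation inside the principal subspace---this is exactly the gauge freedom you correctly identify in your final paragraph, and it contradicts the ``recovering PCA verbatim'' claim. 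To actually elucidate the individual loadings you need either the data in addition to the weights (e.g., eigendecompose $M\Sigma M^{\top}=U_{d_z}\Lambda_{d_z}U_{d_z}^{\top}$, or diagonalize the latent covariance and push the rotation through the decoder) or a symmetry-breaking mechanism such as non-uniform regularization/sequential training. For what it is worth, the paper's own proof only establishes the subspace statement and asserts the ``elucidation'' part without a precise recovery procedure, so on the first claim you and the paper agree; on the second, your proposed procedure as stated would fail.
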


\begin{proposition}
The discriminative self-supervised model learns to separate data distributions in the latent space as Linear Discriminant Analysis in principle.
\end{proposition}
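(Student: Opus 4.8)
The plan is to mirror the linear-regime analysis used for the autoencoder/PCA proposition, now with a discriminative contrastive objective in place of the reconstruction objective. First I would make precise the implicit \emph{class structure} that a discriminative self-supervised method assumes: each training image $x$ (or, more coarsely, each semantic concept) defines a latent class $c$, and the stochastic augmentation operator generates the within-class variation $x \sim P_{X \mid c}$. With this, I define the within-class scatter $\Sigma_W := \mathbb{E}_{c}\,\mathbb{E}_{x \mid c}\big[(x-\mu_c)(x-\mu_c)^\top\big]$, the between-class scatter $\Sigma_B := \mathbb{E}_{c}\big[(\mu_c-\mu)(\mu_c-\mu)^\top\big]$, and the total covariance $\Sigma = \Sigma_W + \Sigma_B$, exactly the quantities appearing in Fisher's discriminant analysis.

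Next I would linearize the encoder, $f(x) = Wx$ with $W\in\mathbb{R}^{d_z\times d}$, and work with a tractable surrogate of the contrastive loss — e.g.\ the spectral/quadratic contrastive objective, or equivalently a second-order Taylor expansion of InfoNCE/NT-Xent about the uniform solution — which decomposes into an \emph{alignment} term penalizing the distance between the embeddings of a positive pair and a \emph{uniformity}/negative term that, together with an anti-collapse constraint, controls the overall spread of the embeddings. The key computation is that the alignment term equals $2\,\mathrm{tr}(W\Sigma_W W^\top)$ up to additive constants, since a positive pair is two independent draws from the same class and hence the covariance of their difference is $2\Sigma_W$, while the anti-collapse constraint normalizes the embeddings via $W\Sigma W^\top = I_{d_z}$ (or fixes $\mathrm{tr}(W\Sigma W^\top)$).

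Then I would rewrite the resulting program, $\min_W \mathrm{tr}(W\Sigma_W W^\top)$ subject to $W\Sigma W^\top = I_{d_z}$, and, using $\Sigma = \Sigma_W + \Sigma_B$, observe that it is equivalent to $\max_W \mathrm{tr}(W\Sigma_B W^\top)$ under the same constraint — that is, Fisher's criterion $\max_W \mathrm{tr}\big((W\Sigma_W W^\top)^{-1}(W\Sigma_B W^\top)\big)$. Its optimum is attained by the generalized eigenvectors of the pencil $(\Sigma_B, \Sigma_W)$ (equivalently $(\Sigma_B,\Sigma)$), which are precisely the LDA projection directions; stacking the top $d_z$ of them into $W$ yields the claimed latent space, and the companion decoder-weight readout of these discriminant directions parallels the PCA-loading readout in the previous proposition. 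I would close by noting that it is exactly the sign of the objective that turns the PCA-type analysis into an LDA-type one: the autoencoder retains the directions of maximal \emph{total} variance, whereas the discriminative model retains the directions of maximal \emph{between-class} variance relative to the within-class (augmentation) noise, i.e.\ it separates the data distributions in the sense of LDA.

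The main obstacle I anticipate is the linearization of the contrastive loss: InfoNCE/NT-Xent is not quadratic, so I must justify replacing it by its spectral surrogate or its second-order expansion and argue that the cosine normalization and the $\log$-partition term contribute only the isotropic constraint $W\Sigma W^\top \propto I_{d_z}$ and additive constants, not anisotropic distortions of the scatter matrices. A secondary subtlety is identifying the augmentation-induced statistics with $\Sigma_W$, which requires the mild assumption that augmentations are label-preserving and that their induced covariance is the effective within-class noise, together with checking that collapse is ruled out precisely by the spectral constraint rather than by a delicate property of the nonlinearity. Once those two points are pinned down, the reduction to Fisher's generalized eigenproblem is routine linear algebra.
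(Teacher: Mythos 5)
Your proposal is sound and reaches the stated conclusion, but by a genuinely different route than the paper. The paper's own argument stays with the nonlinear InfoNCE objective and is an analogy rather than a derivation: it invokes the alignment/uniformity decomposition, identifies the positive-pair alignment term with minimizing the within-class scatter $S_W$, and then bounds the negative (log-partition) term via Jensen's inequality by $\frac{1}{\tau m^2}\sum_{i,j} h_i^\top h_j$, arguing that driving this Gram-matrix sum down flattens the singular-value spectrum of the embeddings, makes the latent space more isotropic, and is thereby ``akin to'' maximizing the between-class scatter $S_B$; no linearization of the encoder, no explicit constraint, and no eigenproblem appears. You instead formalize the same intuition in a linear regime: augmentation statistics define $\Sigma_W$, a quadratic/spectral surrogate of the loss yields an alignment term $\mathrm{tr}(W\Sigma_W W^\top)$, the uniformity/anti-collapse effect is encoded as the constraint $W\Sigma W^\top = I$, and the resulting program is exactly Fisher's criterion, solved by the top generalized eigenvectors of $(\Sigma_B,\Sigma_W)$. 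Your route buys an actual theorem --- the optimum is literally the LDA projection --- at the cost of the two assumptions you yourself flag (validity of the quadratic surrogate of InfoNCE, and reading the log-partition term as an isotropy constraint rather than an anisotropic penalty); the paper avoids those commitments but correspondingly delivers only the qualitative ``in principle'' correspondence, essentially leaving your constraint step at the informal level of ``flatter spectrum implies larger $S_B$.'' One small correction: there is no decoder in the discriminative SSL setting, so your remark about a decoder-weight readout paralleling the PCA-loading readout of the previous proposition has no counterpart here; the parallel is only between the projection (encoder) matrices.
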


Motivated by these theoretical insights, we introduce a metric to assess the stability of the latent space induced by different encoder models. To exemplify these concepts, we refer to an example consisting of two Gaussian distributions in a two-dimensional space, as depicted in Figure \ref{fig:toy_example}(a). The results attained from applying the PCA and LDA algorithms are visually depicted in Figure \ref{fig:toy_example}(b) and (c) respectively. The distribution embedded by the LDA model exhibits greater separability than that by the PCA model. To quantitatively evaluate stability, we add Gaussian noise of different variances to the original 2D data and subsequently train a linear classifier on the latent space. As Figure~\ref{fig:toy_example}(d) illustrates, the accuracy of the LDA model consistently surpasses that of PCA. 

To evaluate the stability of latent space induced from autoencoders and SSL models, we add Gaussian noise to image pixels and then feed the noisy images to a VQGAN encoder and an SSL encoder DINOv2 \cite{oquab2023dinov2}. This experiment aims to examine the resilience of the latent spaces induced by these encoders to such disturbances. We measure the rate of change in discrete tokens, specifically VQ tokens for the latent space induced from the VQGAN encoder and discriminative tokens for the latent space induced from the SSL model, and the cosine similarity in conjunction with the strength of the noise introduced. The experimental results in Table~\ref{table:snr_values} demonstrate that the latent space induced from the SSL model exhibits heightened stability compared to that derived from the VQGAN autoencoder. Therefore, we propose to replace the unstable latent space induced by the reconstructive model with a stable latent space induced by the discriminative self-supervised model for autoregressive models.

\begin{table*}[t]
 \caption{The stability of latent spaces induced from VQ Token and  Discriminative Token (introduced in Section~\ref{sec:stabilize}), assessed across different Signal-to-Noise Ratio (SNR) levels to evaluate performance under varying signal and noise conditions.}
\begin{center}
\begin{small}
 \begin{tabular}{lcccccccc}
 \toprule
 SNR & \textbf{30} & \textbf{25}  & \textbf{20} & \textbf{15} & \textbf{10} & \textbf{5} & \textbf{1} & \textbf{0.01} \\
 \midrule
\textbf{VQ Token change} $\downarrow$ & 0.187 & 0.317 & 0.487 & 0.663 & 0.805 & 0.901 & 0.948 & 0.956 \\
\textbf{Disc Token change} $\downarrow$ & 0.114 & 0.178 & 0.260 & 0.355 & 0.457 & 0.570 & 0.687 & 0.721 \\
\textbf{VQ Token cos-sim} $\uparrow$ & 0.972 & 0.949 & 0.910 & 0.853 & 0.777 & 0.682 & 0.594 & 0.571 \\
\textbf{Disc Token cos-sim} $\uparrow$ & 0.975 & 0.960 & 0.940 & 0.916 & 0.888 & 0.855 & 0.816 & 0.803 \\
\bottomrule
 \end{tabular}
 \label{table:snr_values}
 \end{small}
\end{center}
\end{table*}

\section{Stabilize the Latent Space with Self-supervised Learning Model}\label{sec:stabilize}

\begin{figure}[t]
	\centering
\includegraphics[width=1.0\textwidth]{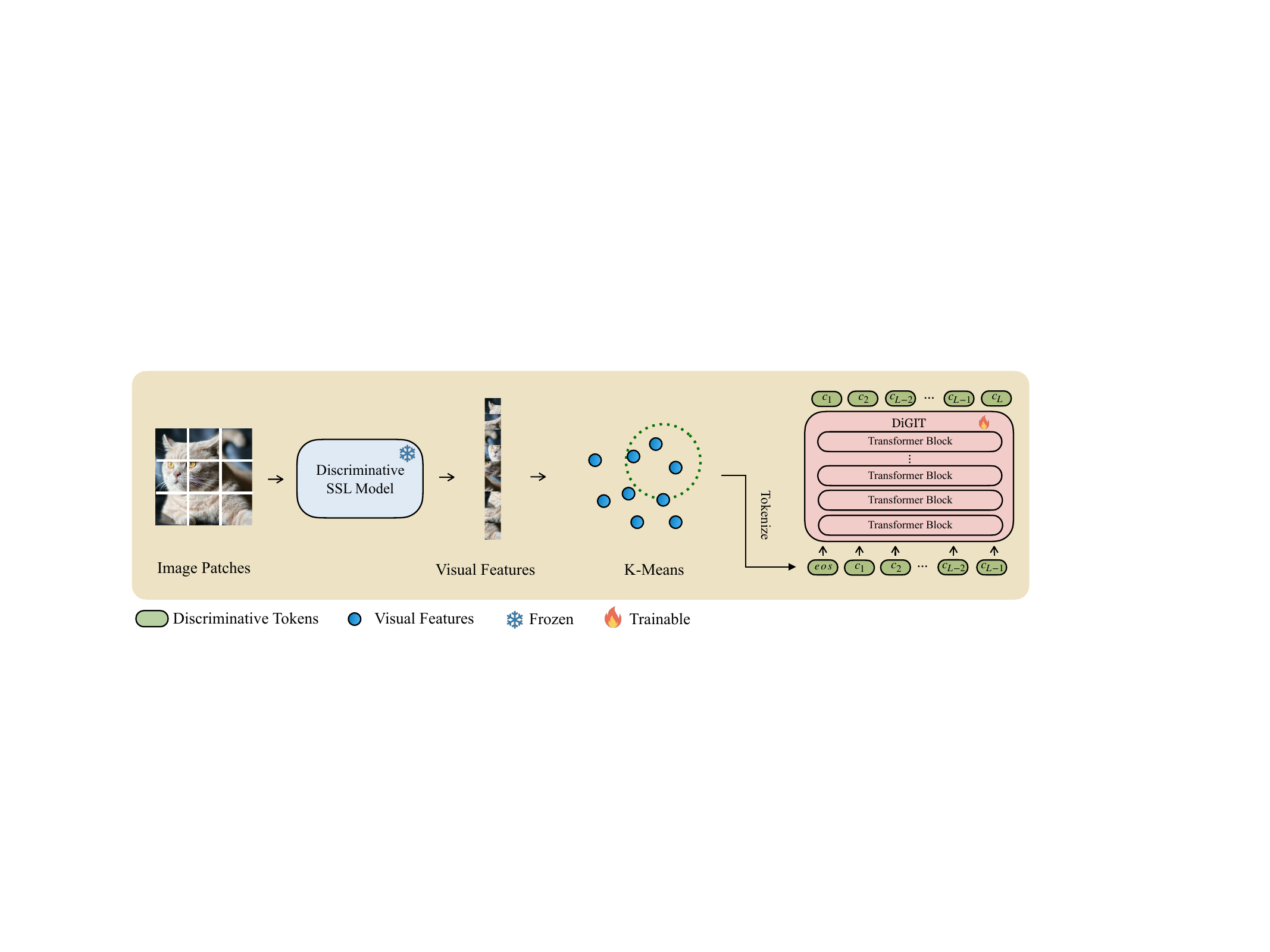}
	\caption{The architecture of DiGIT.}
	\label{fig:main}
\end{figure}

In this section, we present a simple but effective image tokenizer that discretizes the feature representations of discriminative SSL models to form discrete tokens for autoregressive models. The architecture of our model is illustrated in Figure \ref{fig:main}.

\paragraph{Discrete Image Discriminative Tokenizer}

Drawing inspiration from the VQGAN tokenizer \cite{esser2021taming}, which employs an implicit K-Means clustering algorithm within the latent space to generate discrete tokens for autoregressive modeling, we propose a straightforward approach to perform K-Means clustering on the feature space of discriminative SSL models to obtain discrete tokens. To process a given dataset, our initial step involves gathering the features of image patches, akin to the hidden states produced by SSL models. Then we employ a clustering algorithm to group these patches, resulting in a collection of $K$ clustering centers. These centers constitute the codebook for the discrete tokenizer. To determine the discrete tokens for an image patch at inference, we identify its nearest neighbor in the codebook, which is then designated as the discrete token for the respective patch.

\paragraph{Image Autoregressive Modeling}
After converting images into discrete tokens with the discriminative tokenizer, we treat each image as a sequence by flattening the discrete tokens from images into a 1D sequence in raster order. We train a causal Transformer \cite{NIPS2017_3f5ee243} model with the next token prediction objective, which is the same as the standard approach for language models.

\section{Experiments}\label{sec:experiments}

\subsection{Implementation Details}

We take the discriminative SSL model DINOv2 \cite{oquab2023dinov2} as the encoder for all experiments. The K-Means model is trained on the randomly selected 10\% subset of the ImageNet \cite{imagenet_cvpr09} training set. We use the autoregressive model with the same architecture as GPT-2 \cite{radford2019language}. We train the DiGIT models with the base and large sizes. The vocabulary size of the tokenizer for the base is $8192$ and $16000$ for the large size. More implementation details and hyper-parameters are provided in Appendix~\ref{sec:imple_detials}.

\begin{table}[t]
\caption{Linear-probe accuracy of image autoregressive generative models on ImageNet \cite{imagenet_cvpr09}.}
\label{tab:linearprobel}
\begin{center}
\begin{small}
\begin{tabular}{lcccc}
\toprule
Methods & \# Tokens & Features & \# Params & Top-1 Acc.$\uparrow$  \\
\midrule
iGPT-L \cite{pmlr-v119-chen20s} & $32 \times 32$ & 1536 & 1362M & 60.3 \\ 
iGPT-XL \cite{pmlr-v119-chen20s} & $64 \times 64$ & 3072 & 6801M & 68.7 \\ 
VIM+VQGAN \cite{yu2022vectorquantized} & $32 \times 32$ & 1024 & 650M & 61.8 \\ 
VIM+dVAE \cite{yu2022vectorquantized} & $32 \times 32$ & 1024 & 650M & 63.8 \\  
GIVT \cite{GIVT} & $16 \times 16$ & 1024 & 304M & 65.1 \\ 
VIM+ViT-VQGAN \cite{yu2022vectorquantized} & $32 \times 32$ & 1024 & 650M & 65.1 \\ 
VIM+ViT-VQGAN \cite{yu2022vectorquantized} & $32 \times 32$ & 2048 & 1697M & 73.2 \\ 
AIM \cite{el2024scalable} & $16 \times 16$ & 1536 & 0.6B & 70.5 \\ 
\textbf{DiGIT (Ours)} & $16 \times 16$ & 1024 & 219M & 71.7 \\
\textbf{DiGIT (Ours)} & $16 \times 16$ & 1536 & 732M & \textbf{80.3} \\
\bottomrule
\end{tabular}
\end{small}
\end{center}
\end{table}

\subsection{Image Understanding}

The GPT model is famous for learning semantic features by a generative training objective of next token prediction. We compare the image understanding ability of different image autoregressive models with linear-probe as described in iGPT \cite{pmlr-v119-chen20s}. We train a linear classifier on top of the frozen features average from each layer on the ImageNet training set. We report the Top-1 accuracy compared with other image autoregressive models in Table~\ref{tab:linearprobel}. Remarkably, with only 219M parameters, DiGIT achieves a Top-1 accuracy of 71.7\%, surpassing both iGPT and VIM-Base, which have a greater number of parameters and operate at more visual tokens. Despite representing images with a smaller token grid size ($16\times 16$ as opposed to $32\times 32$), DiGIT still delivers superior top-1 accuracy, demonstrating the effectiveness of our tokenizer. Moreover, when we scale DiGIT's parameters from 219M to 732M, the Top-1 accuracy shows an additional increase of 8.6\% and reaches 80\% for the first time. The improvement indicates that DiGIT with the proposed discriminative tokenizer has the potential for the development of large vision models.

\subsection{Image Generation}
Since the SSL models do not have a paired decoder to recover pixels from latent space, the generative models trained with our discriminative tokenizer require an auxiliary image decoder to render pixels. The discriminative tokenizer can be seamlessly integrated with any existing image generative models trained with a tokenizer induced from a reconstructive autoencoder. In our experiment, we train an autoregressive model VQGAN, and 
an MIM model MaskGIT as the pixel decoder respectively. The results are presented in Table~\ref{tab:comparison} and Table~\ref{tab:comparison1}. The autoregressive model equipped with our discriminative tokenizer achieves the SOTA performance with FID reaching $3$ for the first time. Furthermore, the performance significantly improves as the model size increases, demonstrating the potential of a large vision model with next token prediction. Interestingly, when utilizing the DiGIT as the conditioning factor, the performance of both the autoregressive and MaskGIT decoders becomes close (4.62 and 4.79). This observation suggests that stabilizing the latent space produces effects analogous to the iterative stabilization decoding mechanism.

\begin{table}[t]
\caption{Class-unconditional image generation on ImageNet with resolution $256\times 256$. DiGIT + VQ indicates that we utilize golden discriminative tokens alongside VQ generated by autoregressive models.}
\label{tab:comparison}
\begin{center}
\resizebox{0.75\columnwidth}{!}{
\begin{tabular}{llcccc}
\toprule
Type & Methods & \#Param & \#Epoch & FID$\downarrow$ & IS$\uparrow$  \\
\midrule
GAN & BigGAN \cite{brock2018large} & 70M & - & 38.6 & 24.70 \\
Diff. & LDM \cite{rombach2022high} & 395M & - & 39.1 & 22.83 \\
Diff. & ADM \cite{dhariwal2021diffusion}  & 554M & - & 26.2 & 39.70 \\
MIM & MAGE \cite{li2022mage}  & 200M & 1600 & 11.1 & 81.17 \\
MIM & MAGE \cite{li2022mage}  & 463M & 1600 & 9.10 & 105.1 \\
\midrule
MIM & MaskGIT \cite{chang2022maskgit} & 227M & 300 & 20.7 & 42.08 \\
MIM & \textbf{DiGIT} (+MaskGIT)  & 219M & 200 & \textbf{9.04} & \textbf{75.04} \\ 
\midrule
AR & VQGAN \cite{esser2021taming} & 214M & 200 & 24.38 & 30.93 \\
AR & GIVT \cite{GIVT} & 304M & 500 & 17.70 & - \\
AR & GIVT \cite{GIVT} & 1.67B & 500 &  11.02 & - \\
AR & \textbf{DiGIT} (+VQGAN)  & 219M & 400 & \textbf{9.13} & \textbf{73.85} \\ 
AR & \textbf{DiGIT} (+VQGAN)  & 732M & 200 & \textbf{4.59} & \textbf{141.29} \\ 
\textcolor{gray}{validation data} & \textcolor{gray}{DiGIT + VQ} & - & - & \textcolor{gray}{1.92} & \textcolor{gray}{184.40} \\
\textcolor{gray}{validation data} & \textcolor{gray}{VQ only} & - & - & \textcolor{gray}{1.67} & \textcolor{gray}{175.56} \\
\bottomrule
\end{tabular}
}
\end{center}
\end{table}

\begin{table}[t]
\caption{Class-conditional image generation on ImageNet with resolution $256\times 256$. $\dagger$ denotes the model is trained with classifier-free guidance while all the other models are not.}
\label{tab:comparison1}
\begin{center}
\resizebox{0.75\columnwidth}{!}{
\begin{tabular}{llcccc}
\toprule
Type & Methods & \#Param & \#Epoch & FID$\downarrow$ & IS$\uparrow$  \\
\midrule
GAN & BigGAN \cite{brock2018large} & 160M & - & 6.95 & 198.2 \\
Diff. & ADM \cite{dhariwal2021diffusion} & 554M & - & 10.94 & 101.0 \\
Diff. & LDM-4 \cite{rombach2022high} & 400M & - & 10.56 & 103.5 \\
Diff. & DiT-XL/2 \cite{peebles2023scalable} & 675M & - & 9.62 & 121.50 \\
Diff. & L-DiT-7B \cite{peebles2023scalable} & 7B & - & 6.09 & 153.32 \\
MIM & Contextual RQ-Trans \cite{lee2022draftandrevise} & 371M & 300 & 5.45 & 172.6 \\
MIM+AR & VAR \cite{VAR} & 310M & 200 & 4.64 & - \\
MIM+AR & VAR \cite{VAR} & 310M & 200 & 3.60$^{\dagger}$ & 257.5$^{\dagger}$ \\
MIM+AR & VAR \cite{VAR} & 600M & 250 & 2.95$^{\dagger}$ & 306.1$^{\dagger}$ \\
MIM & MAGVIT-v2 \cite{Yu2023LanguageMB} & 307M & 1080 & 3.65 & 200.5 \\
AR & VQVAE-2 \cite{razavi2019generating} & 13.5B & - & 31.11 & 45 \\
AR & RQ-Trans \cite{lee2022autoregressive} & 480M & - & 15.72 & 86.8 \\
AR & RQ-Trans \cite{lee2022autoregressive} & 3.8B & - & 7.55 & 134.0 \\
AR & ViTVQGAN \cite{yu2022vectorquantized} & 650M & 360 & 11.20 & 97.2 \\
AR & ViTVQGAN \cite{yu2022vectorquantized} & 1.7B & 360 & 5.3 & 149.9 \\
AR & GIVT \cite{GIVT} & 304M & 500 & 5.67 & - \\
AR & GIVT \cite{GIVT} & 1.67B & 500 & 3.46 & - \\
\midrule
MIM & MaskGIT \cite{chang2022maskgit} & 227M & 300 & 6.18 & 182.1 \\
MIM & \textbf{DiGIT} (+MaskGIT) & 219M & 200 & \textbf{4.62} & \textbf{146.19} \\ 
\midrule
AR & VQGAN \cite{esser2021taming} & 227M & 300 & 18.65 & 80.4 \\
AR & \textbf{DiGIT} (+VQGAN) & 219M & 400 & \textbf{4.79} & \textbf{142.87} \\ 
AR & \textbf{DiGIT} (+VQGAN) & 732M & 200 & \textbf{3.39} & \textbf{205.96} \\ 
\textcolor{gray}{validation data} & \textcolor{gray}{DiGIT + VQ} & - & - & \textcolor{gray}{1.92} & \textcolor{gray}{184.40} \\
\textcolor{gray}{validation data} & \textcolor{gray}{VQ only} & - & - & \textcolor{gray}{1.67} & \textcolor{gray}{175.56} \\
\bottomrule
\end{tabular}
}
\end{center}
\end{table}

\subsection{Ablation Study}
We conduct the ablation study to present a comprehensive analysis of the proposed discriminative tokenizer in image generation and understanding. The results are illustrated in Table~\ref{fig:ablation}(a) and Figure~\ref{fig:ablation}(b).
For image generation tasks, we take the autoregressive model trained with the VQGAN tokenizer as the baseline. Introducing discriminative tokens leads to a significant improvement, reducing FID to $9.66$ and increasing IS to $69.15$, underscoring the effectiveness of stabilizing latent space for autoregressive models. Further extending the training duration to $400$ epochs yielded additional improvements of $0.53$. A substantial advancement is observed when scaling up the model size to 732M, resulting in FID dropping dramatically to $4.59$ and IS more than doubling to $141.29$. This indicates that increasing the model's capacity significantly enhances its ability to model complex relationships within the data, which is a similar phenomenon in GPT models. Overall, the study highlights the latent space stabilization and the potential of large-scale training of autoregressive modeling in images with our discriminative tokenizer. 

For the image understanding task, we investigate the effect of K-Means clusters and features learned in the different layers of DiGIT. We can see that increasing the cluster number can further improve the accuracy of the linear probe, which means the image autoregressive model can benefit from a larger vocabulary. The linear probe accuracy increases quickly from the first transformer block, reaches its peak at the middle layers, and finally decreases a little bit for the last few blocks. This observation connects the image autoregressive model to the text language model where the semantic information is learned in the middle layers of the transformer.

\begin{figure}[t]
  \centering
  \subcaptionbox{}{
  \resizebox{0.55\columnwidth}{!}{
    \begin{tabular}{lccccc}
      \toprule
 & FID$\downarrow$ & IS$\uparrow$ \\
\midrule
VQ Token & 24.38 & 30.93 \\
+ Discriminative Token & \textbf{9.66} & \textbf{69.15} \\ 
+ Longer Training (400 epoch) & \textbf{9.13} & \textbf{73.85} \\ 
+ Scale up (732M) & \textbf{4.59} & \textbf{141.29} \\
      \bottomrule
    \end{tabular}
    }
  }
   \hfill
  \subcaptionbox{}{
    \includegraphics[width=0.35\linewidth]{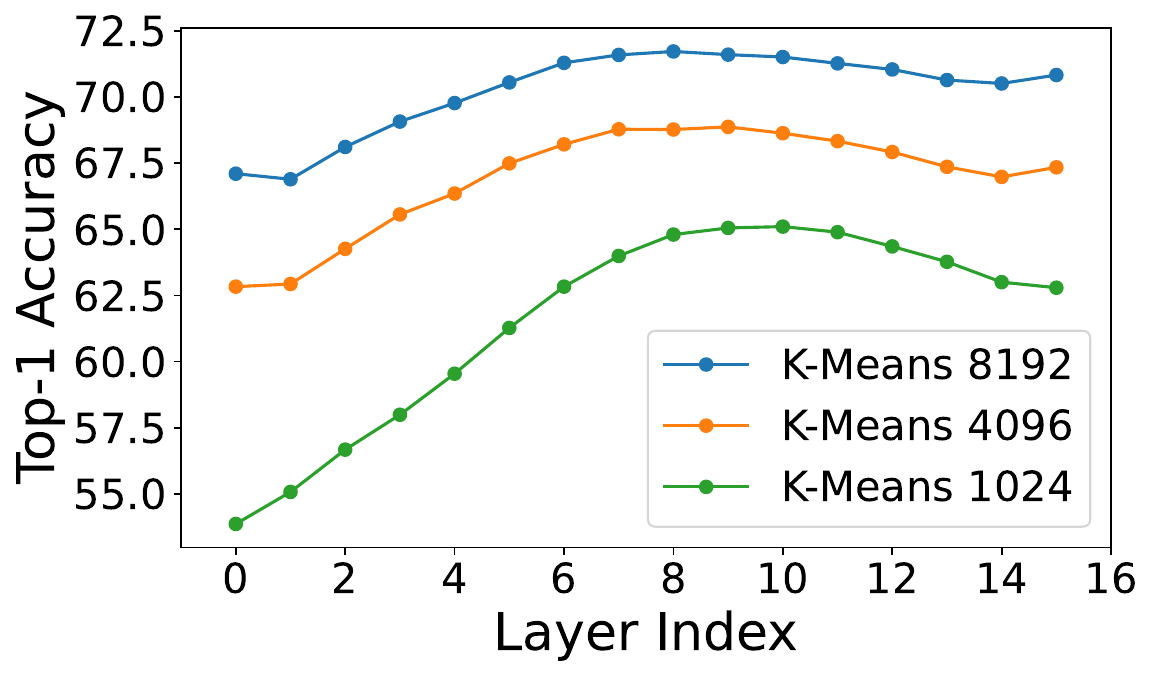}
  }
  \caption{Ablation study of DiGIT. (a) The comparison of tokenizer, training steps, and model size in the image generation task. (b) 
    Linear-probe accuracy from different layers in the pre-trained DiGIT-base with different number of K-Means clusters.\label{fig:ablation}
  }
\end{figure}

\subsection{Comparison of Discrete Tokenizers}\label{sec:comparison_of_discrete_tokenizers}

We conduct an experiment to investigate the effect of different SSL models on latent space. We generally categorize the SSL models into two types according to their pre-training objectives: (1) Global level (MoCo) and patch level (MAE,iBOT), (2) reconstructive (MAE) and discriminative (MoCo, iBOT). At the global level, the loss function is computed using an aggregate output such as $\mathbf{[CLS]}$ token or mean pooling. In contrast, patch-level models involve patches directly in loss computation. Reconstructive models, such as MAE, aim to recover image pixels in a manner akin to autoencoders, while discriminative models are optimized to learn the distinguishable features. As demonstrated in Table~\ref{fig:prefix}(a), the discriminative objective plays a pivotal role in image generation in that it can stabilize the latent space. Furthermore, because generative models need to predict patches, the inclusion of a patch-level loss function can enhance performance.

To assess the stability of latent space induced by our discriminative tokenizer and reconstructive tokenizer. We pre-train two auto-regressive generative models on the ImageNet dataset~\cite{imagenet_cvpr09}, employing the proposed discriminative tokenizer and VQGAN tokenizer respectively. We provide each model with the upper half of the target image as a conditional prompt for generation, challenging them to complete the lower half of the image. A stable latent space should be able to help the autoregressive model generate the lower half more robustly, maintaining thematic and aesthetic coherence. As shown in Figure~\ref{fig:prefix}(b), the FID decreases for both models when given a longer prefix context. However, when the prefix length is reduced from 75\% to only 12.5\% of the image, the model trained with the VQGAN tokenizer encounters difficulties in producing images that adhere to the specified prompt. In contrast, the model utilizing the discriminative tokenizer effectively continues to produce congruent visual tokens, maintaining low FID scores even with a significantly truncated prefix.

\begin{figure}[t]
\centering
  \subcaptionbox{\label{tab:ssl_selection}}{
\begin{small}
\begin{tabular}{lccccc}
\toprule
    SSL & Type & FID$\downarrow$ & IS$\uparrow$ & Acc@LP & Acc@OL$^{\dagger}$ \\
    \midrule
    MAE & P+R & 45.51 & 18.39 & 31.40 & 75.8 \\ 
    MoCo & G+D & 20.38 & 45.02 & 59.22 & 76.7 \\  
    iBOT & P+D & \textbf{16.81} & \textbf{57.88} & \textbf{61.10} & 76.0 \\ 
    \textcolor{gray}{VQGAN} & \textcolor{gray}{-} & \textcolor{gray}{24.38} & \textcolor{gray}{30.93} & \textcolor{gray}{-} & \textcolor{gray}{-} \\
\bottomrule
\end{tabular}
\end{small}
}
  \hfill
  \subcaptionbox{}{
    \includegraphics[width=0.35\linewidth]{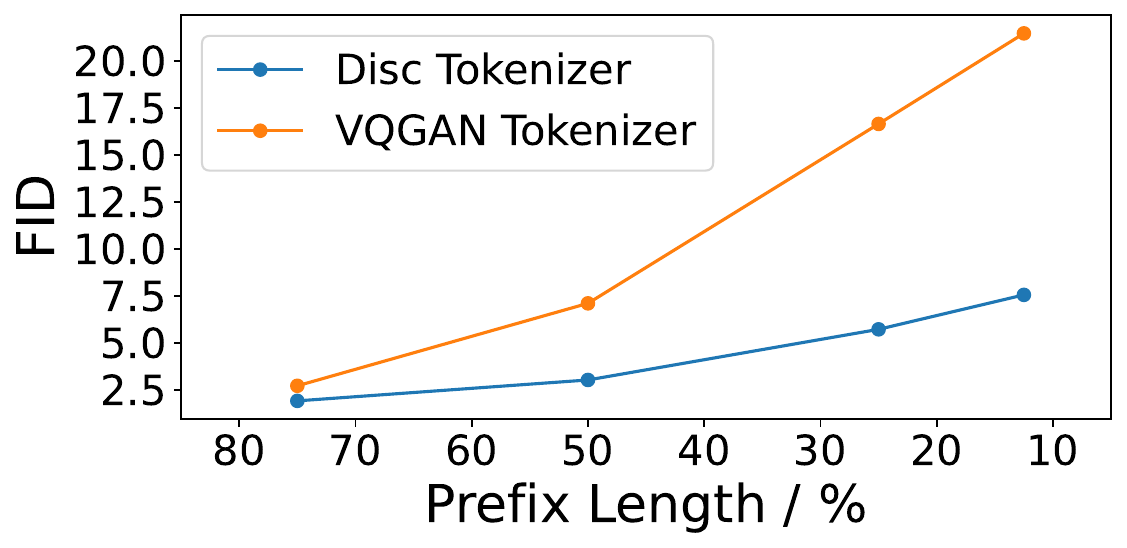}
  }
  \caption{
    \textbf{(a)}: The comparison of tokenizers induced from different SSL models. Acc@LP is obtained by linear probing on the autoregressive model (model size of 39M for 100 epochs) trained with tokenizers. Acc@OL is the linear probe score of the SSL model.
    ``P'': patch, ``D'': discriminative, ``R'': reconstructive.
    \textbf{(b)}: Generation quality curves in FID on ImageNet $256\times 256$ valid set when scaling the prefix length with discriminative tokenizer and reconstructive VQGAN tokenizer. Both are autoregressive models with 219M parameters.
  \label{fig:prefix}}
\end{figure}

\section{Related Work}

\paragraph{Image Tokenizer}
The image tokenizer \cite{NIPS2017_7a98af17,razavi2019generating,esser2021taming} is essential in converting pixels into discrete tokens for autoregressive generative modeling. VQVAE \cite{NIPS2017_7a98af17} first proposes to assign latent features learned by an encoder to the nearest entry in the learnable codebook embeddings, followed by a decoder to reconstruct the original image pixels. VQGAN \cite{esser2021taming} further incorporates adversarial loss and perceptual loss to improve the image synthesis quality. RQ-Transformer \cite{lee2022autoregressive} extends the single-layer quantizer to the multi-layer residual quantizer to augment the visual tokenizer's ability to capture fine-grained details. ViT-VQGAN \cite{yu2022vectorquantized} incorporates the modern Vision-Transformer \cite{dosovitskiy2020image} into VQGAN to enhance the reconstruction quality. MAGVIT-v2 \cite{Yu2023LanguageMB} substitute the online update codebook in VQGAN with a lookup-free quantizer to enable a larger vocabulary for generative language models.

\paragraph{Image Autoregressive Modeling} 
Inspired by the success of autoregressive Transformer \cite{NIPS2017_3f5ee243} in text generation tasks, there have been several efforts to replicate it in image generation tasks. One of the pioneering works is iGPT \cite{pmlr-v119-chen20s}, which pre-trains an autoregressive model on pixels with the same architecture as GPT2 \cite{radford2019language}, achieving promising results in unsupervised visual representation learning. LVM \cite{bai2023sequential} proposes a large-scale vision dataset composed of images and videos, based on which a large vision model is trained. Empirical observations indicate that the model scales effectively across various tasks with in-context learning \cite{wei2022chain}. Similarly, AIM \cite{el2024scalable} follows ViT \cite{dosovitskiy2020image} and represents images in the patch. It is observed that the performance of image recognition continues to increase as the model size scales up. VAR \cite{VAR} proposes a next-scale prediction to generate images from coarse to fine in a hybrid of autoregressive and nonautoregressive manner.

\paragraph{Self-supervised Learning Models}
Self-supervised learning (SSL) \cite{chen2020simple,grill2020bootstrap} plays an important role in learning fundamental visual representations for downstream tasks. Among them, SimCLR \cite{chen2020simple}, MoCo \cite{chen2020simple,he2019moco,chen2020mocov2} 
compute losses at the image level through $\mathbf{[CLS]}$ token aggregation or pooling operations with contrastive learning. iBOT-style models \cite{zhou2021ibot,caron2021emerging,oquab2023dinov2} extend the loss to the patch level, achieving improved performance in dense prediction tasks. BEiT \cite{bao2022beit} uses VQGAN tokenized sequences as the training target. MAE \cite{MaskedAutoencoders2021} randomly masks some patches of images and reconstructs the pixels with unmasked patches as the condition.

\section{Conclusion}\label{sec:conclusion}
In this paper, we make an exploration in the latent space for generative modeling. We introduce a unified perspective on the relationship between latent space and generative models, emphasizing the stability of latent space in image generative modeling. Subsequently, we propose a simple but effective discriminative image tokenizer, followed by an image autoregressive generative model DiGIT. Empirical results indicate that our tokenizer achieves superior performance across both image understanding and image generation tasks. Notably, when DiGIT is scaled up in model size, it exhibits even greater enhancements, indicating the potential for the development of large vision models. Our findings challenge the conventional wisdom that proficiency in reconstruction equates to an effective latent space for auto-regressive generation. Through this work, we aim to rekindle interest in the generative pre-training of image auto-regressive models and encourage a reevaluation of the fundamental components that define latent space for generative models.

\begin{ack}
This research was supported by the National Key Research and Development Program of China
(Grant No. 2022YFB3103100), the National Natural Science Foundation of China (Grant No. 62276245).
\end{ack}

\bibliography{neurips_2024}

\medskip

\newpage
\appendix

\section{Appendix}

\subsection{Limitation and Broader Impact}\label{sec:limitation}
Our proposed discriminative tokenizer exhibits remarkable capabilities in both the image understanding and the image generation tasks, which is challenging for a single model. However, the discriminative tokenizer induced by the SSL model cannot directly render pixels. Consequently, we need to train another decoder model to convert tokens from discriminative tokenizers to VQGAN tokens. The potential for a direct generation of RGB pixels from the tokens produced by the discriminative tokenizer remains an uncharted avenue, which we leave for future work. 

Image generative models possess a dichotomous nature, particularly within the realm of visual media. On the positive side, they foster a myriad of creative endeavors, and methodologies aim to minimize the costs of training and inference, holding the potential to broaden access and democratize the use of this technology. On the negative side, the simplicity with which manipulated content can be crafted and propagated raises serious concerns. This includes the proliferation of misinformation and spam. Furthermore, generative models may inadvertently unveil aspects of their training data, a particularly troubling issue when that data includes sensitive or personal information collected without explicit consent.

\subsection{Proof of claims in Section \ref{sec:prob_form}}\label{subsec:proof_of_claims}

\begin{proposition}
    Consider the optimization problem for $X\in \mathbb{R}^n$: 
    \begin{equation}
        \min_{W_1\in \mathbb{R}^{m\times n},W_2\in \mathbb{R}^{n\times m}} \|X-W_2W_1X \|^2_F,
    \end{equation}
    which is a linear autoencoder. $W_2$ is a minimizer of the problem if and only if its column space is spanned by the first $m$ loading vectors of $X$.
\end{proposition}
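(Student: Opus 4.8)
The plan is to reduce the joint minimization over $(W_1,W_2)$ to a subspace-fitting problem and then invoke the Ky Fan (Rayleigh-trace) maximum principle. First I would fix $W_2$ and note that $\min_{W_1}\|X-W_2W_1X\|_F^2$ is an ordinary linear least-squares problem. Writing $P$ for the orthogonal projection onto $\mathcal{S}:=\operatorname{col}(W_2)$, every matrix of the form $W_2W_1X$ has its columns in $\mathcal{S}$, and $W_1=W_2^{+}$ already achieves $W_2W_1X=PX$; by orthogonality of $(I-P)X$ to $\mathcal{S}$ we get the Pythagorean identity
\begin{equation}
\|X-W_2W_1X\|_F^2=\|(I-P)X\|_F^2+\|PX-W_2W_1X\|_F^2,
\end{equation}
so that $\min_{W_1}\|X-W_2W_1X\|_F^2=\|(I-P_{\operatorname{col}(W_2)})X\|_F^2$. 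This value depends on $W_2$ only through the subspace $\operatorname{col}(W_2)$, whose dimension is at most $m$; note a rank-deficient $W_2$ only shrinks $\mathcal{S}$, so we may analyse all $m$-dimensional candidate subspaces and check afterwards that smaller ones are not optimal.

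Second, I would rewrite this residual via traces. Let $\Sigma:=XX^\top$ denote the second-moment matrix of the data, whose eigenvectors are by definition the loading vectors of $X$. Then
\begin{equation}
\|(I-P)X\|_F^2=\operatorname{tr}\bigl((I-P)\Sigma\bigr)=\operatorname{tr}(\Sigma)-\operatorname{tr}(P\Sigma),
\end{equation}
so minimizing the reconstruction error over $m$-dimensional $\mathcal{S}$ is equivalent to maximizing $\operatorname{tr}(P_{\mathcal{S}}\Sigma)$.

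Third, I would establish the needed eigenvalue lemma: for symmetric PSD $\Sigma$ with eigenvalues $\sigma_1\ge\sigma_2\ge\cdots$ and eigenvectors $v_1,v_2,\dots$, one has $\max_{\dim\mathcal{S}=m}\operatorname{tr}(P_{\mathcal{S}}\Sigma)=\sigma_1+\cdots+\sigma_m$, with equality iff $\mathcal{S}$ is spanned by eigenvectors realizing those top $m$ eigenvalues. The quick proof diagonalizes $\Sigma=\sum_i\sigma_iv_iv_i^\top$ and writes $\operatorname{tr}(P\Sigma)=\sum_i\sigma_ic_i$ with $c_i:=v_i^\top Pv_i\in[0,1]$ and $\sum_ic_i=\operatorname{tr}(P)=m$; the linear program $\max\sum_i\sigma_ic_i$ over $0\le c_i\le 1$, $\sum_ic_i=m$ forces all weight onto the top $m$ coordinates, and tracing back the equality conditions pins $\mathcal{S}=\operatorname{span}(v_1,\dots,v_m)$ whenever $\sigma_m>\sigma_{m+1}$ (and more generally any $\sigma_m$-eigenvalue-attaining eigenspace, the usual multiplicity caveat). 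Combining the three steps: the best value attainable for a given $W_2$ is $\operatorname{tr}(\Sigma)-\operatorname{tr}(P_{\operatorname{col}(W_2)}\Sigma)$, which is globally minimal exactly when $\operatorname{col}(W_2)=\operatorname{span}(v_1,\dots,v_m)$; and any such $W_2$, paired with $W_1=W_2^{+}$, does attain the minimum, which is the claimed ``if and only if''.

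The main obstacle I anticipate is the careful treatment of the equality case in the Ky Fan bound — showing the optimal subspace must actually be a $\Sigma$-invariant span of loading vectors, and cleanly stating the uniqueness caveat when eigenvalues of $\Sigma$ have multiplicity straddling index $m$. The projection bookkeeping in step one and the rank-deficiency check for $W_2$ are routine by comparison.
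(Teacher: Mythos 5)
Your proposal is correct, and it is more complete than the paper's own argument. Both proofs share the first reduction: eliminating $W_1$ by noting the optimal choice is the pseudoinverse $W_2^{+}$, so that $W_2W_1X$ becomes $P_{\operatorname{col}(W_2)}X$ and the objective depends on $W_2$ only through its column space. From there the paths diverge. The paper reparametrizes via a QR decomposition to reduce to an orthonormality-constrained problem $\min_W\|X-W^\top WX\|_F^2$, $WW^\top=I$, and then simply asserts that the minimizer projects onto the span of the top $m$ loading vectors, leaving the key spectral fact unproved (implicitly deferring to the classical Baldi--Hornik / Eckart--Young results). You instead convert the residual to $\operatorname{tr}(\Sigma)-\operatorname{tr}(P\Sigma)$ with $\Sigma=XX^\top$ and prove the needed Ky Fan maximum principle directly, via the observation that $c_i=v_i^\top Pv_i\in[0,1]$ with $\sum_i c_i=m$ reduces the problem to a transparent linear program whose equality conditions force $\operatorname{col}(W_2)=\operatorname{span}(v_1,\dots,v_m)$. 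What your route buys is exactly the part the paper glosses over: an explicit proof of the ``only if'' direction, together with the correct multiplicity caveat when $\sigma_m=\sigma_{m+1}$ (a caveat the proposition as stated silently ignores); your handling of rank-deficient $W_2$ likewise closes a case the paper does not discuss, modulo the degenerate situation where $\Sigma$ has fewer than $m$ positive eigenvalues, which neither treatment addresses. The paper's QR step buys nothing you need, since your Pythagorean/projection argument already removes the dependence on the particular factorization of the projector.
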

\begin{proof}
First, we derive the condition for the optimal $W_1$ in the context of this optimization problem. Setting the gradient of the objective function with respect to $W_1$ to zero leads to $W_1$ being the left Moore-Penrose pseudoinverse of $W_2$ \cite{BALDI198953}. Similarly, setting the gradient with respect to $W_2$ to zero would identify $W_2$ as the right pseudoinverse of $W_1$:
\begin{equation}
W_1 = W_2^{\dagger} = (W_2^TW_2)^{-1}W_2^T.
\end{equation}
This finding indicates that the optimization can be simplified to focus on a single matrix (either $W_1$ or $W_2$), hence removing the redundancy in the parameters:
\begin{equation}
\min_{W_2\in \mathbb{R}^{n\times m}} \|X-W_2W_2^{\dagger}X \|^2_F.
\end{equation}
The term $W_2W_2^{\dagger} = W_2(W_2^TW_2)^{-1}W_2^T$ is recognized as the matrix form of the orthogonal projection operator onto the column space of $W_2$. This property holds true even when the column vectors of $W_2$ are not orthonormal.

By performing QR decomposition on $W_2$, $W_2 = QR$ where $Q$ is an orthogonal matrix ($Q^TQ = I$) and $R$ is an upper triangular matrix, we effectively transform the problem into optimizing over orthogonal matrices. The objective can thus be restated as:
\begin{equation}
\min_{W\in \mathbb{R}^{m\times n}} \|X-W^TWX \|^2_F, \quad \text{subject to} \quad WW^T=I_{n\times n}.
\end{equation}
This revelation explicitly demonstrates that minimizing the reconstruction error in the space $\mathbb{R}^{m \times n}$ demands that $W$ (equivalent to $W_2$ in our context) projects $X$ onto a space spanned by its most significant structural components (in terms of variance), which are precisely the first $m$ loading vectors, or principal components, of $X$.
\end{proof}

\begin{proposition}
    The discriminative self-supervised model learns to separate data distributions in latent space as LDA in principle.  
\end{proposition}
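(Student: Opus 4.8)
The plan is to set up a simplified linear model of a discriminative self-supervised objective and show that its optimum recovers the Fisher discriminant directions, mirroring exactly how the previous proposition reduced the linear autoencoder to PCA. First I would fix a two-(or $C$-)class data model: let $X$ be drawn from a mixture of distributions with class means $\mu_c$ and shared within-class covariance $\Sigma_W$, and let $S_B = \sum_c p_c (\mu_c - \bar\mu)(\mu_c - \bar\mu)^T$ be the between-class scatter. I would then model the ``discriminative SSL'' step abstractly as learning a linear projection $W \in \mathbb{R}^{m\times n}$ that maps samples to a latent code $WX$, trained with a contrastive-type criterion: pull together codes of samples sharing a (pseudo-)label / augmentation-identity while pushing apart codes of different ones. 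Writing this criterion in expectation over positive and negative pairs, the positive term contracts within-class spread ($\mathrm{tr}(W\Sigma_W W^T)$ up to normalization) while the negative term expands the spread of class centroids ($\mathrm{tr}(W S_B W^T)$), so the population objective becomes, after the same kind of normalization used to pass from $W_2$ to an orthonormal $W$ in the PCA proposition, a generalized Rayleigh quotient $\max_W \mathrm{tr}\big((W\Sigma_W W^T)^{-1} W S_B W^T\big)$ subject to a normalization constraint.

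Next I would invoke the classical fact that the maximizer of this trace ratio over $m$-dimensional projections is spanned by the top-$m$ eigenvectors of $\Sigma_W^{-1} S_B$ — precisely the Linear Discriminant Analysis loading vectors — with the usual argument via simultaneous diagonalization of the pencil $(S_B, \Sigma_W)$ and the Ky Fan / Courant–Fischer characterization of the sum of the top $m$ generalized eigenvalues. The conclusion is then the analogue of Proposition on linear autoencoders: the latent space of the linearized discriminative model is congruent to the LDA subspace, i.e., it is the projection that maximally separates the class-conditional distributions relative to their within-class scatter, which is exactly the ``separate data distributions in latent space as LDA'' claim. I would also remark that this explains the toy experiment in Figure~\ref{fig:toy_example}: the LDA/discriminative projection keeps the two Gaussians far apart relative to their noise scale, hence the linear classifier on top of it is more robust to added pixel noise than one built on the PCA/autoencoder projection, which is the stability property the paper is after.

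The main obstacle, as I see it, is that a faithful contrastive loss (InfoNCE, or the alignment/uniformity decomposition) is not literally a quadratic form in $W$, so the reduction to the $\Sigma_W^{-1}S_B$ Rayleigh quotient is only exact after a linearization or a second-order (small-output, or Gaussian-pair) approximation; I would handle this by stating explicitly the regime in which the claim holds (``in principle'', linear regime, as the proposition's wording already hedges), choosing a concrete surrogate — e.g. a spectral/quadratic contrastive loss of the form $\mathbb{E}_{\text{pos}}\|Wx - Wx'\|^2 - \lambda\,\mathbb{E}_{\text{neg}}\|Wx - Wx'\|^2$ with an isometry constraint $WW^T = I$ — and deriving the LDA solution for that surrogate, then arguing informally that more elaborate discriminative objectives share the same leading-order behavior. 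A secondary technical point is invertibility of $\Sigma_W$ (or using a pseudoinverse / regularizer when $n$ is large relative to the data), which I would dispatch with the same Moore–Penrose device used in the autoencoder proof.
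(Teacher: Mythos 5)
Your proposal is essentially sound, but it takes a genuinely different route from the paper. The paper's proof is an analogy-level argument: it takes the asymptotic (alignment/uniformity) decomposition of InfoNCE, identifies the positive-pair alignment term with minimizing the within-class scatter $S_W$, and then uses Jensen's inequality on the uniformity term to argue that minimizing the sum of pairwise inner products flattens the singular-value spectrum of the embeddings, making the latent space more isotropic and thereby (qualitatively) increasing the between-class scatter $S_B$ --- i.e.\ the two terms of InfoNCE play the roles of the denominator and numerator of the Fisher criterion ``in principle,'' with no linearization and no claim that the optimum is literally the LDA subspace. You instead linearize the encoder, replace InfoNCE by a quadratic (spectral-contrastive) surrogate, and reduce the population objective to a generalized Rayleigh quotient for the pencil $(S_B,\Sigma_W)$, whose maximizer is the span of the top eigenvectors of $\Sigma_W^{-1}S_B$ by simultaneous diagonalization / Ky Fan. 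That buys a sharper statement (an actual optimality characterization, exactly parallel to the PCA proposition) at the cost of the surrogate-loss assumption, which you correctly flag; the paper's version keeps the genuine InfoNCE objective but only delivers a heuristic correspondence. Two small points to tighten in your route: (i) negatives in contrastive learning are drawn from the marginal data distribution, so the repulsive term expands the \emph{total} scatter $S_T=S_B+S_W$ rather than $S_B$ alone --- this is harmless, since maximizing $\mathrm{tr}\bigl((W\Sigma_W W^T)^{-1}WS_TW^T\bigr)$ has the same optimal subspace as the Fisher ratio, but it should be stated; (ii) positive pairs are augmentations of one image, not same-class samples, so identifying the alignment term with within-class scatter needs an explicit assumption that augmentation neighborhoods respect class structure (the paper silently makes the same identification).
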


\begin{proof}
We first consider the objective of Fisher LDA for two-class classification. LDA seeks to find a linear projection that maximizes the separation between the two classes:

\begin{equation}
J(w) = \frac{w^T S_B w}{w^T S_W w},
\end{equation}

where $S_B$ is the between-class scatter matrix, $S_W$ is the within-class scatter matrix, and $w$ is the projection vector.

We take contrastive learning with InfoNCE for analysis because it is the most popular learning objective in discriminative SSL models. For contrastive learning using the asymptotic form of the InfoNCE objective \cite{wang2020understanding}, we have two components:

\begin{equation}
\mathcal{L} = -\frac{1}{\tau}\mathbb{E}_{(x,x^+)\sim p_{pos}}\left[f(x)^\top f(x^+)\right] + \mathbb{E}_{x\sim p_{data}}\left[\log\mathbb{E}_{x^-\sim p_{data}}\left[e^{f(x)^\top f(x^-)/\tau}\right]\right],
\end{equation}
where the first term encourages similarity between positive pairs. In LDA, this is analogous to minimizing $S_W$ because we want points from the same class to be close to each other when projected onto $w$. The second term, when expanded using Jensen's inequality, represents an upper bound on the regularized sum of all pairwise inner products between different embeddings, effectively encouraging dissimilarity between all samples:

\begin{equation}
\mathbb{E}_{x\sim p_{data}}\left[\log\mathbb{E}_{x^-\sim p_{data}}\left[e^{f(x)^\top f(x^-)/\tau}\right]\right] = \frac{1}{m}\sum_{i=1}^m\log\left(\frac{1}{m}\sum_{j=1}^me^{\mathbf{h}_i^\top \mathbf{h}_j/\tau}\right) \geq \frac{1}{\tau m^2}\sum_{i=1}^m\sum_{j=1}^m h_i^\top h_j.
\end{equation}

When $\mathbf{h}_i = f(x_i)$ are normalized, optimizing this term aims to decrease $\mathrm{Sum}(\mathbf{W}\mathbf{W}^\top)= \sum_{i=1}^m \sum_{j=1}^m\mathbf{h}_i^\top\mathbf{h}_j$. This term, being an upper bound for the largest eigenvalue of $\mathbf{W}\mathbf{W}^\top$, when minimized, encourages a "flatter" singular value distribution of the embedding space. This flattening makes the embedding space more isotropic, which in turn increases the between-class scatter $S_B$.

To make the connection explicit, consider that in LDA, we want to maximize $S_B$, the between-class scatter, which is typically represented as:

\begin{equation}
S_B = (\mu_1 - \mu_2)(\mu_1 - \mu_2)^T,
\end{equation}

where $\mu_1$ and $\mu_2$ are the means of the two classes.

Analogously, in the contrastive learning framework, minimizing the second term scatters the embeddings more uniformly in the high-dimensional space, akin to maximizing between-class scatter. The uniform distribution of the embeddings across the space increases the separation between classes or clusters of points, akin to the effect of maximizing $S_B$.



\end{proof}

\subsection{Implementation Details} \label{sec:imple_detials}

We extract the hidden states from the third-to-last layer of DINOv2 for discriminative tokenizer training, as it suggests more generalized representations in the intermediate layers compared to the last layer. All DiGIT models are trained from scratch with a batch size of $2048$ for the base model and $1024$ for the large model, over a duration of $200$ epochs. For the image generation task, the decoder model for pixel rendering is trained with a batch size of $2048$ for $400$ epochs, while the large model uses a batch size of $1024$ for $200$ epochs. The base model consists of $16$ layers with a dimension of $1024$ and a hidden dimension of $4096$. The large model consists of $24$ layers with a dimension of $1536$ and a hidden dimension of $6144$. Both configurations feature 16 attention heads. For all models, we utilize the Adam \cite{kingma2014adam} optimizer with $\beta=(0.9,0.98)$. We employ an inverse square root decay schedule for the learning rate with a peak value of $3e-4$ for $10$ epochs warm-up. We use the same VQGAN model as MAGE \citet{li2022mage}. During training, we apply data augmentations including random crops resized to the short edge and horizontal flips. For the decoding strategy, we apply top-k sampling with $K=400$ for the large model and $K=200$ for the base model. In the autoregressive decoder for pixel rendering, we use a probability of $0.8$ with a temperature of $1.0$ for top-p sampling. All experiments are conducted using the \textsc{Fairseq} \cite{ott2019fairseq} library.

\begin{figure}[t]
  \centering
  \subcaptionbox{\label{fig:topp}}{
    \includegraphics[width=0.48\linewidth]{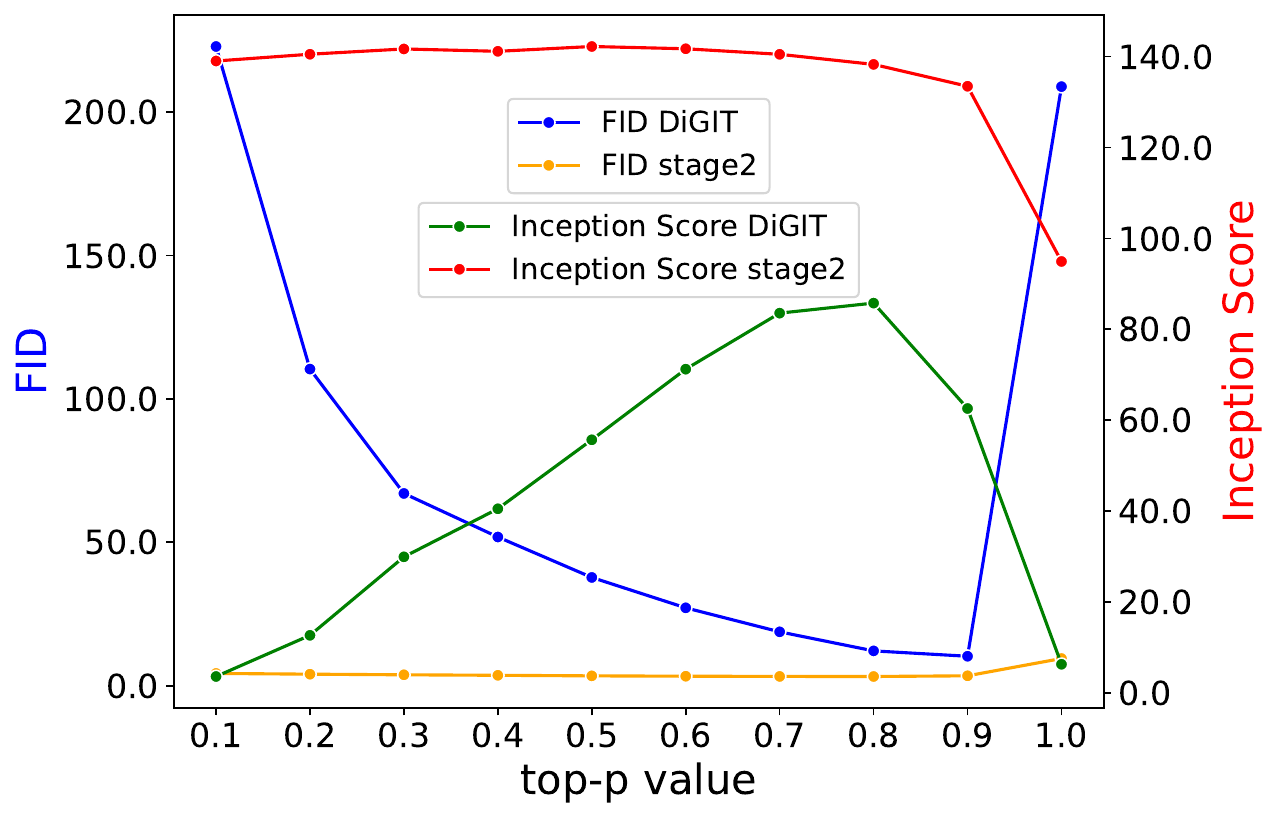}
  }
  \hfill
  \subcaptionbox{\label{fig:topk}}{
    \includegraphics[width=0.48\linewidth]{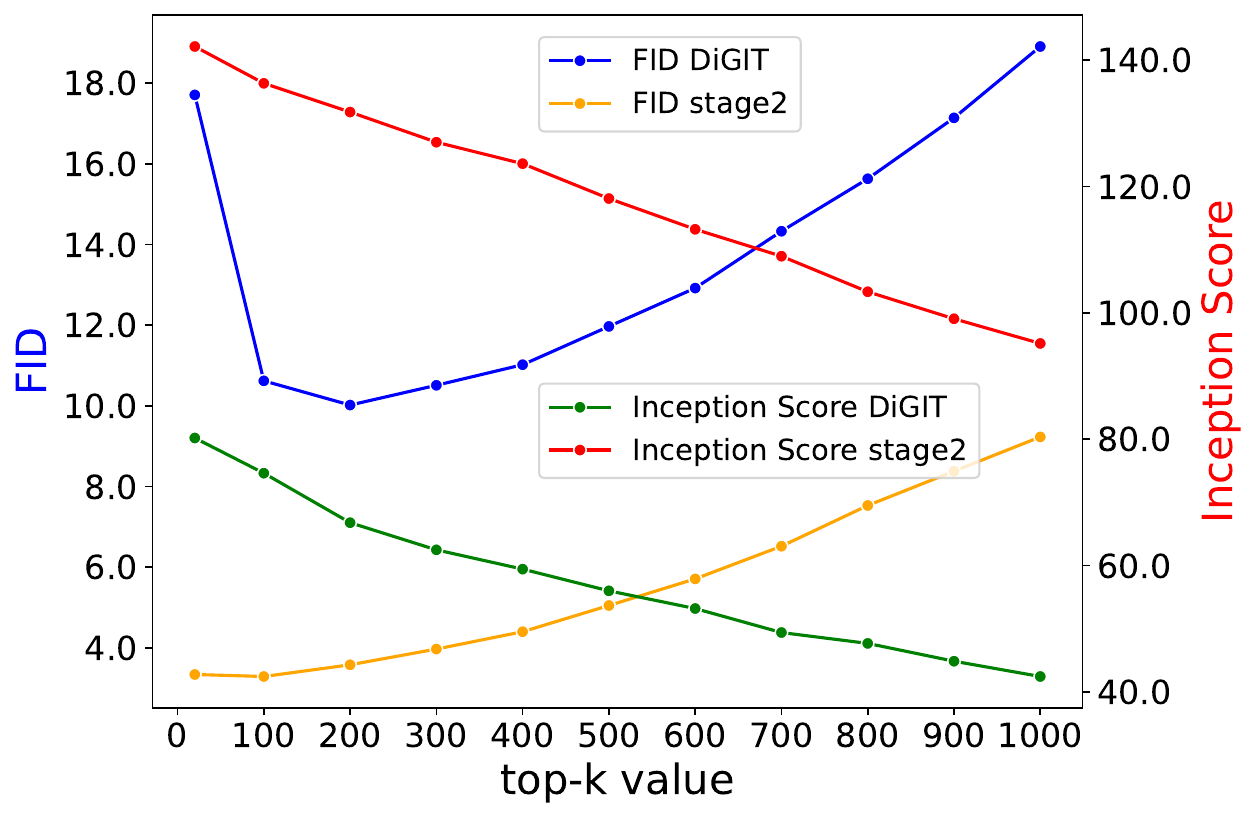}
  }
  \caption{
    FID and Inception Score as a function of top-k, top-p sampling on the image generation task with DiGIT-base. The decoding temperature is fixed to 1.0. The ``stage2'' denotes the autoregressive model for pixel rendering.
  }
\end{figure}

\begin{figure}[t]
	\centering
\includegraphics[width=1.0\textwidth]{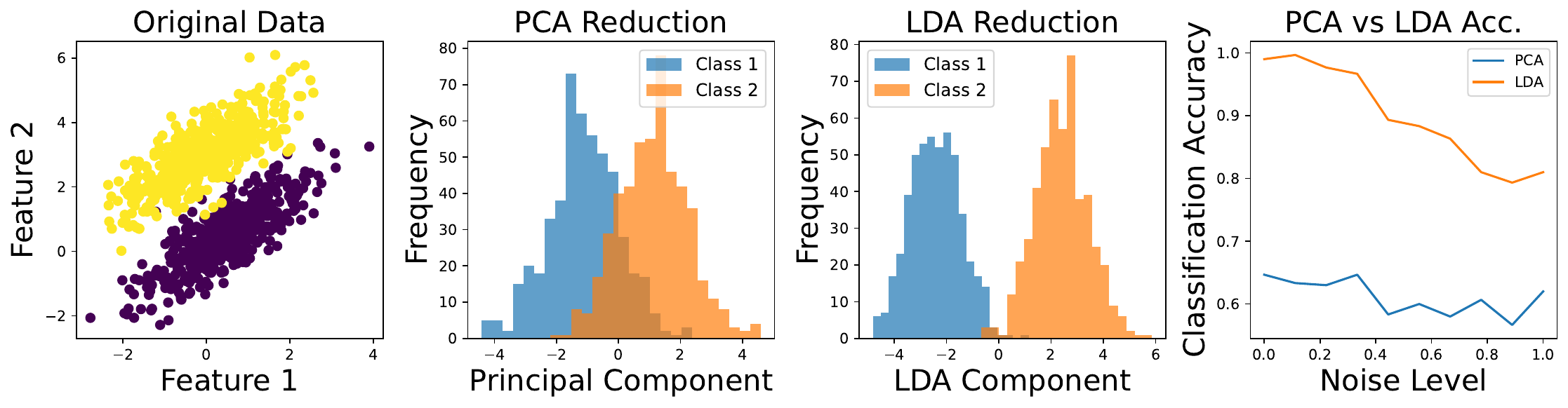}
	\caption{Toy example of PCA and LDA.}
	\label{fig:toy_example}
\end{figure}

\subsection{Qualitative Cases}

\begin{figure}[H]
    \centering

    \begin{minipage}{0.18\textwidth}
        \centering
        \includegraphics[width=\linewidth]{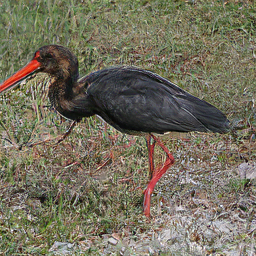}
    \end{minipage}
    \begin{minipage}{0.18\textwidth}
        \centering
        \includegraphics[width=\linewidth]{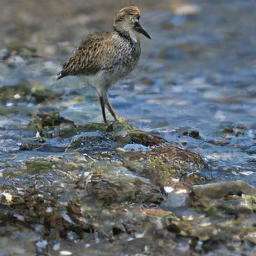}
    \end{minipage}
    \begin{minipage}{0.18\textwidth}
        \centering
        \includegraphics[width=\linewidth]{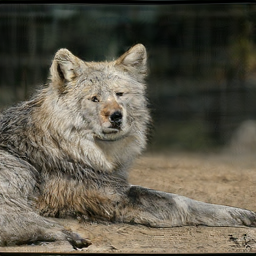}
    \end{minipage}
    \begin{minipage}{0.18\textwidth}
        \centering
        \includegraphics[width=\linewidth]{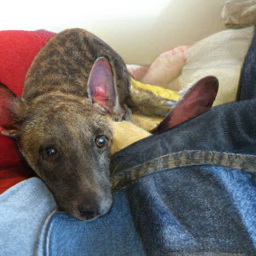}
    \end{minipage}
    \begin{minipage}{0.18\textwidth}
        \centering
        \includegraphics[width=\linewidth]{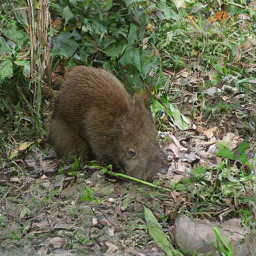}
    \end{minipage}
    
    \vspace{10pt}

    \begin{minipage}{0.18\textwidth}
        \centering
        \includegraphics[width=\linewidth]{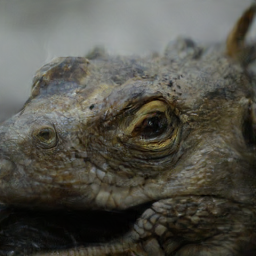}
    \end{minipage}
    \begin{minipage}{0.18\textwidth}
        \centering
        \includegraphics[width=\linewidth]{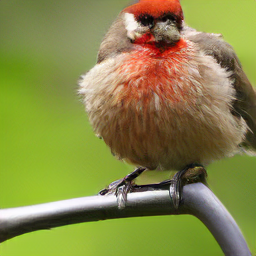}
    \end{minipage}
    \begin{minipage}{0.18\textwidth}
        \centering
        \includegraphics[width=\linewidth]{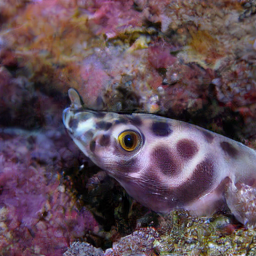}
    \end{minipage}
    \begin{minipage}{0.18\textwidth}
        \centering
        \includegraphics[width=\linewidth]{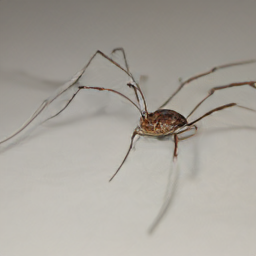}
    \end{minipage}
    \begin{minipage}{0.18\textwidth}
        \centering
        \includegraphics[width=\linewidth]{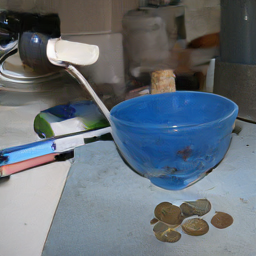}
    \end{minipage}

    \vspace{10pt}

    \begin{minipage}{0.18\textwidth}
        \centering
        \includegraphics[width=\linewidth]{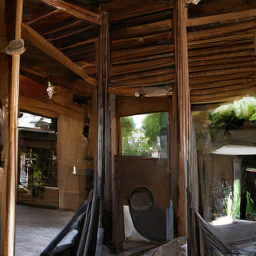}
    \end{minipage}
    \begin{minipage}{0.18\textwidth}
        \centering
        \includegraphics[width=\linewidth]{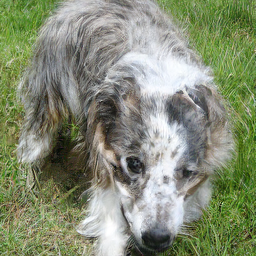}
    \end{minipage}
    \begin{minipage}{0.18\textwidth}
        \centering
        \includegraphics[width=\linewidth]{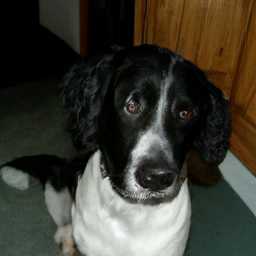}
    \end{minipage}
    \begin{minipage}{0.18\textwidth}
        \centering
        \includegraphics[width=\linewidth]{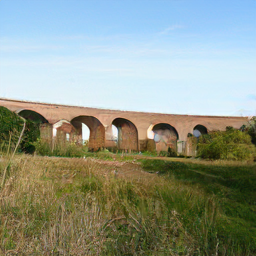}
    \end{minipage}
    \begin{minipage}{0.18\textwidth}
        \centering
        \includegraphics[width=\linewidth]{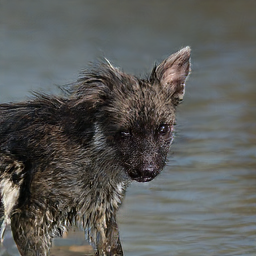}
    \end{minipage}

    \vspace{10pt}

    \begin{minipage}{0.18\textwidth}
        \centering
        \includegraphics[width=\linewidth]{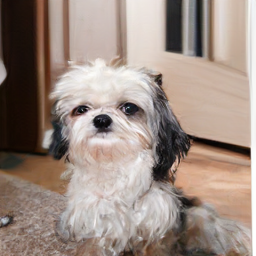}
    \end{minipage}
    \begin{minipage}{0.18\textwidth}
        \centering
        \includegraphics[width=\linewidth]{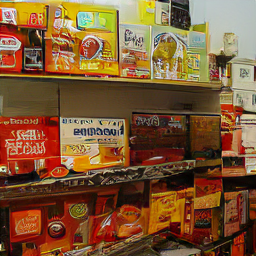}
    \end{minipage}
    \begin{minipage}{0.18\textwidth}
        \centering
        \includegraphics[width=\linewidth]{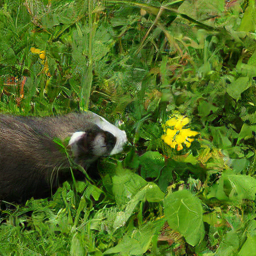}
    \end{minipage}
    \begin{minipage}{0.18\textwidth}
        \centering
        \includegraphics[width=\linewidth]{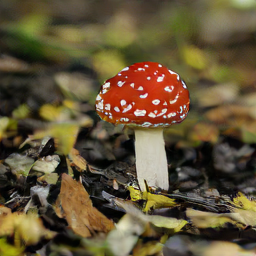}
    \end{minipage}
    \begin{minipage}{0.18\textwidth}
        \centering
        \includegraphics[width=\linewidth]{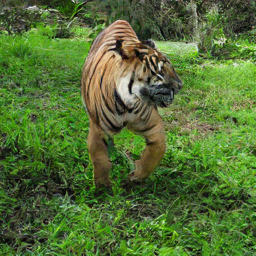}
    \end{minipage}

    \vspace{10pt}

    \begin{minipage}{0.18\textwidth}
        \centering
        \includegraphics[width=\linewidth]{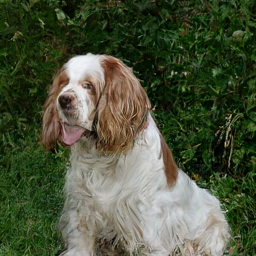}
    \end{minipage}
    \begin{minipage}{0.18\textwidth}
        \centering
        \includegraphics[width=\linewidth]{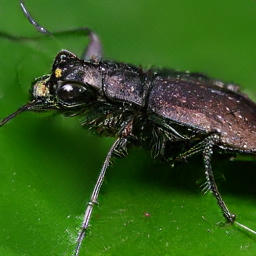}
    \end{minipage}
    \begin{minipage}{0.18\textwidth}
        \centering
        \includegraphics[width=\linewidth]{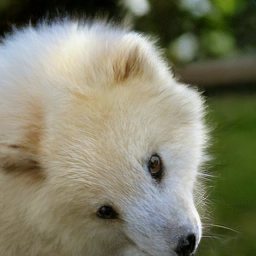}
    \end{minipage}
    \begin{minipage}{0.18\textwidth}
        \centering
        \includegraphics[width=\linewidth]{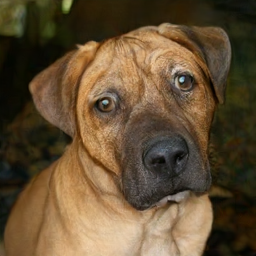}
    \end{minipage}
    \begin{minipage}{0.18\textwidth}
        \centering
        \includegraphics[width=\linewidth]{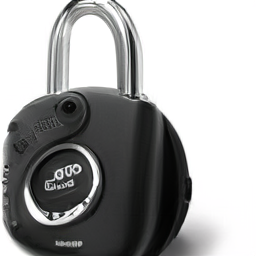}
    \end{minipage}

    \vspace{10pt}

    \begin{minipage}{0.18\textwidth}
        \centering
        \includegraphics[width=\linewidth]{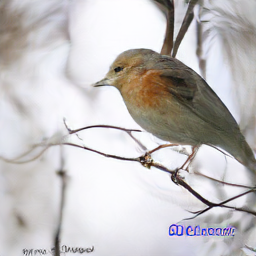}
    \end{minipage}
    \begin{minipage}{0.18\textwidth}
        \centering
        \includegraphics[width=\linewidth]{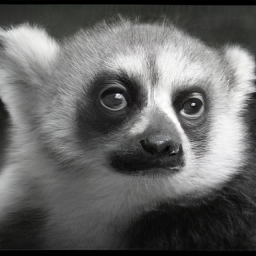}
    \end{minipage}
    \begin{minipage}{0.18\textwidth}
        \centering
        \includegraphics[width=\linewidth]{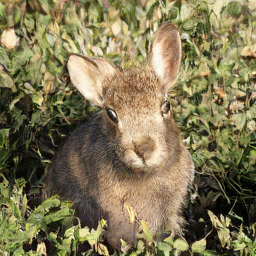}
    \end{minipage}
    \begin{minipage}{0.18\textwidth}
        \centering
        \includegraphics[width=\linewidth]{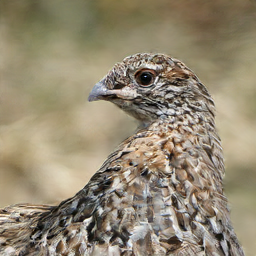}
    \end{minipage}
    \begin{minipage}{0.18\textwidth}
        \centering
        \includegraphics[width=\linewidth]{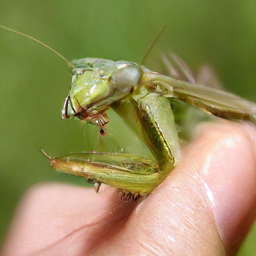}
    \end{minipage}

    \vspace{10pt}

    \begin{minipage}{0.18\textwidth}
        \centering
        \includegraphics[width=\linewidth]{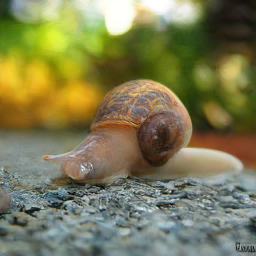}
    \end{minipage}
    \begin{minipage}{0.18\textwidth}
        \centering
        \includegraphics[width=\linewidth]{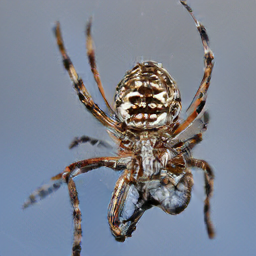}
    \end{minipage}
    \begin{minipage}{0.18\textwidth}
        \centering
        \includegraphics[width=\linewidth]{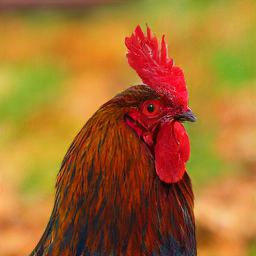}
    \end{minipage}
    \begin{minipage}{0.18\textwidth}
        \centering
        \includegraphics[width=\linewidth]{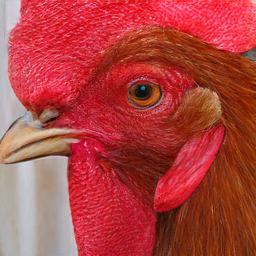}
    \end{minipage}
    \begin{minipage}{0.18\textwidth}
        \centering
        \includegraphics[width=\linewidth]{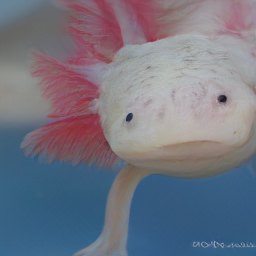}
    \end{minipage}

    \caption{Class-unconditional image generation results on ImageNet 256×256 by DiGIT.}

\end{figure}

\begin{figure}[H]
    \centering

    \begin{minipage}{0.18\textwidth}
        \centering
        \includegraphics[width=\linewidth]{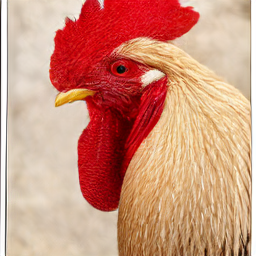}
    \end{minipage}
    \begin{minipage}{0.18\textwidth}
        \centering
        \includegraphics[width=\linewidth]{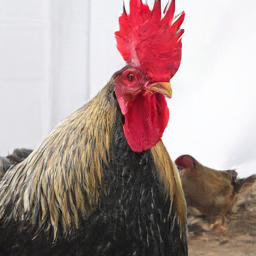}
    \end{minipage}
    \begin{minipage}{0.18\textwidth}
        \centering
        \includegraphics[width=\linewidth]{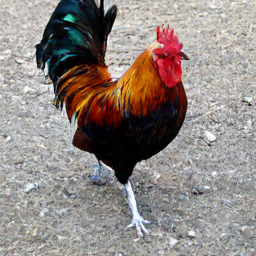}
    \end{minipage}
    \begin{minipage}{0.18\textwidth}
        \centering
        \includegraphics[width=\linewidth]{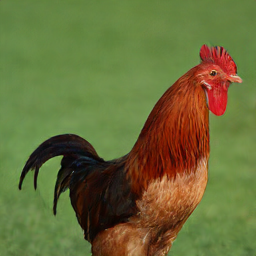}
    \end{minipage}
    \begin{minipage}{0.18\textwidth}
        \centering
        \includegraphics[width=\linewidth]{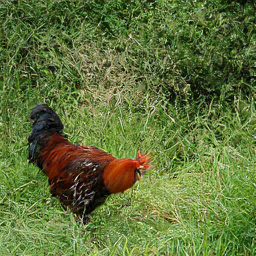}
    \end{minipage}
    
    \vspace{8pt}

    \begin{minipage}{0.18\textwidth}
        \centering
        \includegraphics[width=\linewidth]{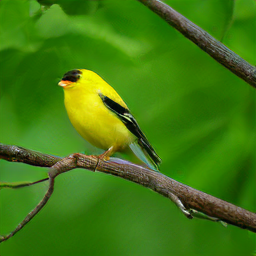}
    \end{minipage}
    \begin{minipage}{0.18\textwidth}
        \centering
        \includegraphics[width=\linewidth]{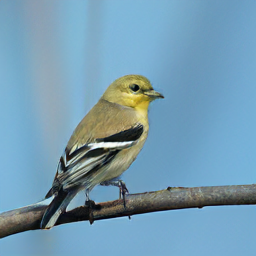}
    \end{minipage}
    \begin{minipage}{0.18\textwidth}
        \centering
        \includegraphics[width=\linewidth]{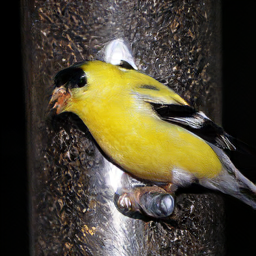}
    \end{minipage}
    \begin{minipage}{0.18\textwidth}
        \centering
        \includegraphics[width=\linewidth]{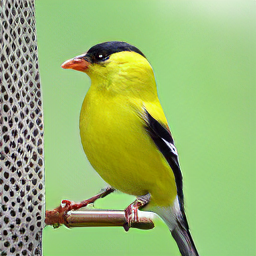}
    \end{minipage}
    \begin{minipage}{0.18\textwidth}
        \centering
        \includegraphics[width=\linewidth]{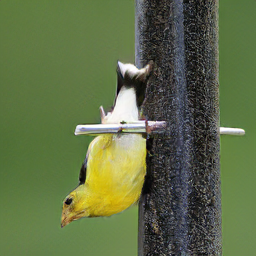}
    \end{minipage}

    \vspace{8pt}

    \begin{minipage}{0.18\textwidth}
        \centering
        \includegraphics[width=\linewidth]{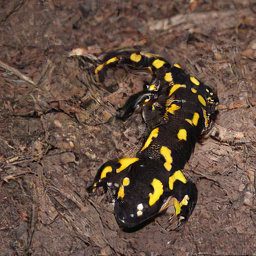}
    \end{minipage}
    \begin{minipage}{0.18\textwidth}
        \centering
        \includegraphics[width=\linewidth]{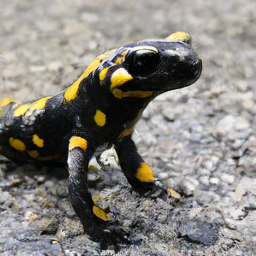}
    \end{minipage}
    \begin{minipage}{0.18\textwidth}
        \centering
        \includegraphics[width=\linewidth]{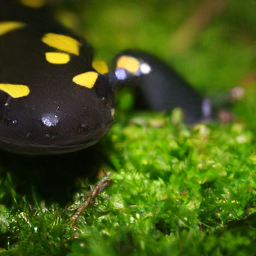}
    \end{minipage}
    \begin{minipage}{0.18\textwidth}
        \centering
        \includegraphics[width=\linewidth]{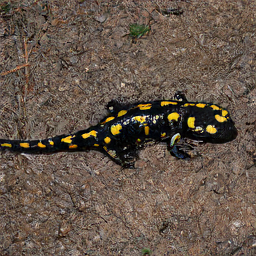}
    \end{minipage}
    \begin{minipage}{0.18\textwidth}
        \centering
        \includegraphics[width=\linewidth]{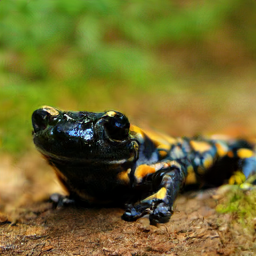}
    \end{minipage}

    \vspace{8pt}

    \begin{minipage}{0.18\textwidth}
        \centering
        \includegraphics[width=\linewidth]{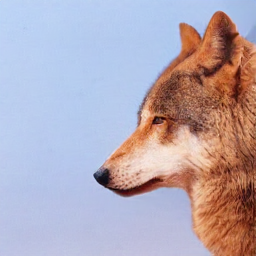}
    \end{minipage}
    \begin{minipage}{0.18\textwidth}
        \centering
        \includegraphics[width=\linewidth]{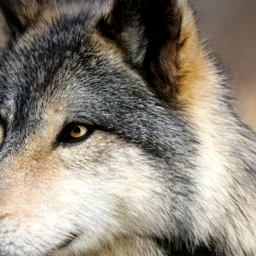}
    \end{minipage}
    \begin{minipage}{0.18\textwidth}
        \centering
        \includegraphics[width=\linewidth]{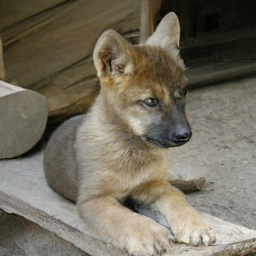}
    \end{minipage}
    \begin{minipage}{0.18\textwidth}
        \centering
        \includegraphics[width=\linewidth]{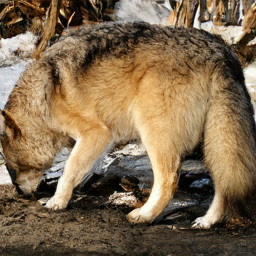}
    \end{minipage}
    \begin{minipage}{0.18\textwidth}
        \centering
        \includegraphics[width=\linewidth]{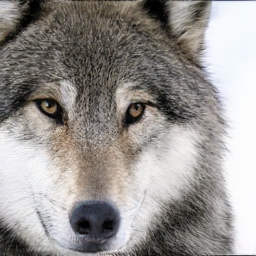}
    \end{minipage}

    \vspace{8pt}

    \begin{minipage}{0.18\textwidth}
        \centering
        \includegraphics[width=\linewidth]{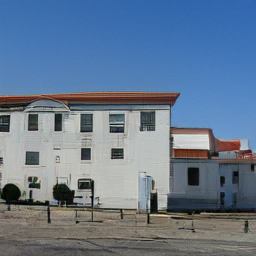}
    \end{minipage}
    \begin{minipage}{0.18\textwidth}
        \centering
        \includegraphics[width=\linewidth]{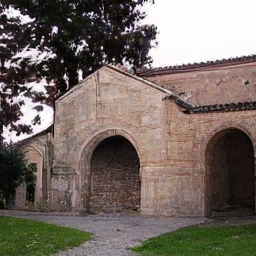}
    \end{minipage}
    \begin{minipage}{0.18\textwidth}
        \centering
        \includegraphics[width=\linewidth]{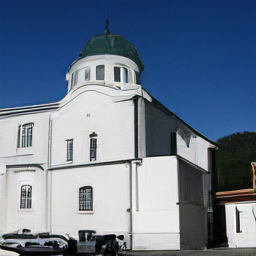}
    \end{minipage}
    \begin{minipage}{0.18\textwidth}
        \centering
        \includegraphics[width=\linewidth]{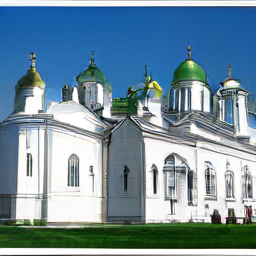}
    \end{minipage}
    \begin{minipage}{0.18\textwidth}
        \centering
        \includegraphics[width=\linewidth]{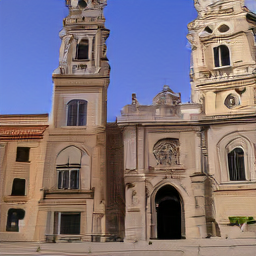}
    \end{minipage}

    \vspace{8pt}

    \begin{minipage}{0.18\textwidth}
        \centering
        \includegraphics[width=\linewidth]{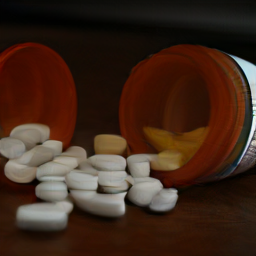}
    \end{minipage}
    \begin{minipage}{0.18\textwidth}
        \centering
        \includegraphics[width=\linewidth]{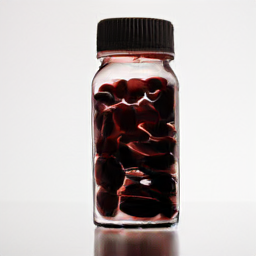}
    \end{minipage}
    \begin{minipage}{0.18\textwidth}
        \centering
        \includegraphics[width=\linewidth]{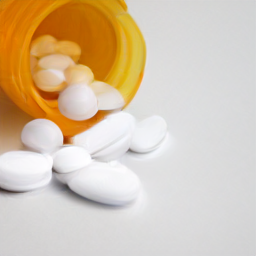}
    \end{minipage}
    \begin{minipage}{0.18\textwidth}
        \centering
        \includegraphics[width=\linewidth]{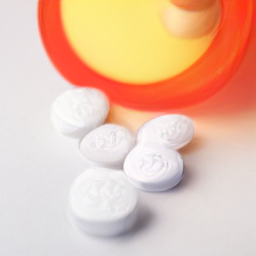}
    \end{minipage}
    \begin{minipage}{0.18\textwidth}
        \centering
        \includegraphics[width=\linewidth]{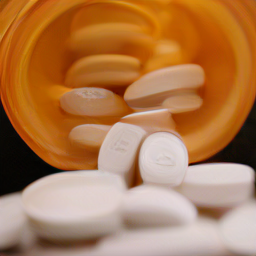}
    \end{minipage}

    \vspace{8pt}

    \begin{minipage}{0.18\textwidth}
        \centering
        \includegraphics[width=\linewidth]{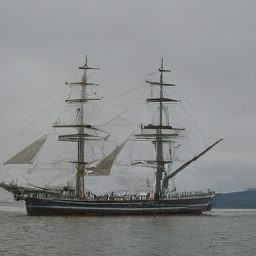}
    \end{minipage}
    \begin{minipage}{0.18\textwidth}
        \centering
        \includegraphics[width=\linewidth]{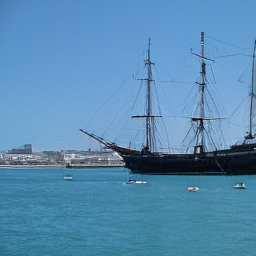}
    \end{minipage}
    \begin{minipage}{0.18\textwidth}
        \centering
        \includegraphics[width=\linewidth]{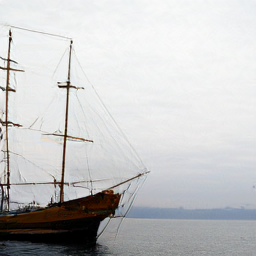}
    \end{minipage}
    \begin{minipage}{0.18\textwidth}
        \centering
        \includegraphics[width=\linewidth]{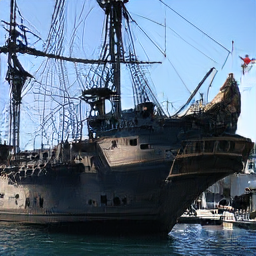}
    \end{minipage}
    \begin{minipage}{0.18\textwidth}
        \centering
        \includegraphics[width=\linewidth]{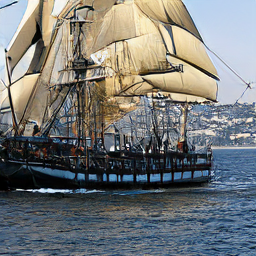}
    \end{minipage}

    \vspace{8pt}

    \begin{minipage}{0.18\textwidth}
        \centering
        \includegraphics[width=\linewidth]{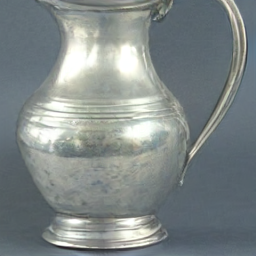}
    \end{minipage}
    \begin{minipage}{0.18\textwidth}
        \centering
        \includegraphics[width=\linewidth]{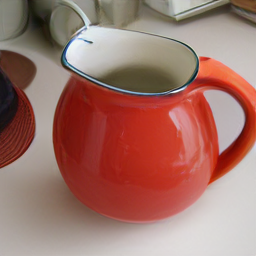}
    \end{minipage}
    \begin{minipage}{0.18\textwidth}
        \centering
        \includegraphics[width=\linewidth]{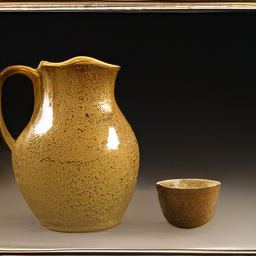}
    \end{minipage}
    \begin{minipage}{0.18\textwidth}
        \centering
        \includegraphics[width=\linewidth]{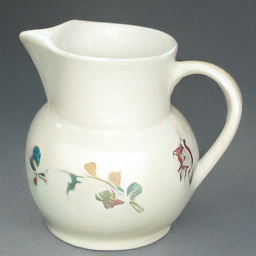}
    \end{minipage}
    \begin{minipage}{0.18\textwidth}
        \centering
        \includegraphics[width=\linewidth]{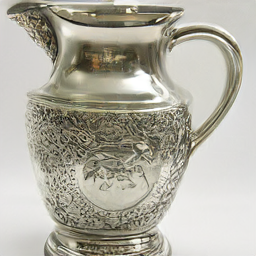}
    \end{minipage}

    \caption{Class-conditional image generation results on ImageNet 256×256 by DiGIT, where the images in the same row share the same class label.}

\end{figure}

\end{document}